\theoremstyle{plain}
\newtheorem{theorem}{Theorem}[section]
\newtheorem{proposition}[theorem]{Proposition}
\newtheorem{lemma}[theorem]{Lemma}
\newtheorem{corollary}[theorem]{Corollary}
\theoremstyle{definition}
\newtheorem{definition}[theorem]{Definition}
\theoremstyle{remark}
\newcommand{\cS}{\mathcal{S}}
\newcommand{\ind}{\mathds{1}}
\newcommand{\bag}{\mathcal{B}}
\newcommand{\cX}{\mathcal{X}}
\newcommand{\cY}{\mathcal{Y}}
\newcommand{\popl}{\mathcal{L}}
\newcommand{\rad}{\mathfrak{R}}
\newcommand{\hyps}{\mathcal{H}}
\newcommand{\cD}{\mathcal{D}}
\newcommand{\cF}{\mathcal{F}}
\newcommand{\Rset}{\mathbb{R}}
\newcommand{\bernoulli}{\text{Bernoulli}}
\newcommand{\hy}{\widehat{y}}
\newcommand{\tg}{\widetilde{g}}
\newcommand{\tL}{\widetilde{\ell}}
\newcommand{\ty}{\widetilde{y}}
\newcommand{\tY}{\widetilde{Y}}
\newcommand{\bw}{\mathbf{w}}
\newcommand{\cW}{\mathcal{W}}
\newcommand{\inner}[1]{\langle #1 \rangle}
\newcommand{\easyllp}{\textsc{{\hyphenpenalty=10000EasyLLP}}}
\DeclareMathOperator*{\E}{\mathbb{E}}
\DeclareMathOperator*{\PP}{\mathbb{P}}
\newcommand \cov {\operatorname*{Cov}}
\icmltitlerunning{Easy Learning from Label Proportions}
\begin{document}

\twocolumn[
\icmltitle{Easy Learning from Label Proportions}



\icmlsetsymbol{equal}{*}

\begin{icmlauthorlist}
\icmlauthor{Robert Istvan Busa-Fekete}{yyy}
\icmlauthor{Heejin Choi}{yyy}
\icmlauthor{Travis Dick}{yyy}
\icmlauthor{Claudio Gentile}{yyy}
\icmlauthor{Andres Munoz medina}{yyy}
\end{icmlauthorlist}

\icmlaffiliation{yyy}{Google Research, USA}

\icmlcorrespondingauthor{Robert Istvan Busa-Fekete}{busarobi@google.com}
\icmlcorrespondingauthor{Heejin Choi}{heejincs@gmail.com}
\icmlcorrespondingauthor{Travis Dick}{tdick@google.com}
\icmlcorrespondingauthor{Claudio Gentile}{cgentile@google.com}
\icmlcorrespondingauthor{Andres Munoz medina}{ammedina@google.com}

\icmlkeywords{Machine Learning, ICML}

\vskip 0.3in
]

\printAffiliationsAndNotice{\icmlEqualContribution} 

\begin{abstract}
We consider the problem of Learning from Label Proportions (LLP), a weakly supervised classification setup where instances are grouped into ``bags’’, and only the frequency of class labels at each bag is available. Albeit, the objective of the learner is to achieve low task loss at an individual instance level. Here we propose \easyllp: a flexible and simple-to-implement debiasing approach based on aggregate labels, which operates on arbitrary loss functions.
Our technique allows us to accurately estimate the expected loss of an arbitrary model at an individual level. We showcase the flexibility of our approach by applying it to
popular learning frameworks, like Empirical Risk Minimization (ERM) and Stochastic Gradient Descent (SGD) with provable guarantees on instance level performance. More concretely, we exhibit a
variance reduction technique that makes the quality of LLP learning deteriorate only by a factor of $k$ ($k$ being bag size) in both ERM and SGD setups, as compared to full supervision. Finally, we validate our theoretical results on multiple datasets demonstrating our algorithm performs as well or better than previous LLP approaches in spite of its simplicity. 
\end{abstract}

\section{Introduction}\label{sec:intro}
In traditional supervised learning problems, a learner has access to a sample of labeled examples. This collection of labeled examples is used to fit a model (decision trees, neural networks, random forests, ...) by minimizing a loss over the observed sample. By contrast, in the problem of Learning from Label Proportions (LLP), the learner only observes collections of unlabeled feature vectors called {\em bags}, together with the proportion of positive examples in that bag (Figure \ref{f:llp}).
The LLP problem is motivated by a number of applications where access to individual examples is too expensive or impossible to achieve, or available at aggregate level for privacy-preserving
reasons. Examples include e-commerce, fraud detection, medical databases \cite{pat+14}, high energy physics \cite{dery+18}, election prediction \cite{sun+17}, medical image analysis \cite{bor+18}, remote sensing \cite{ding+17}.

\begin{figure}[t]
\centering
\includegraphics[width=0.35\textwidth]{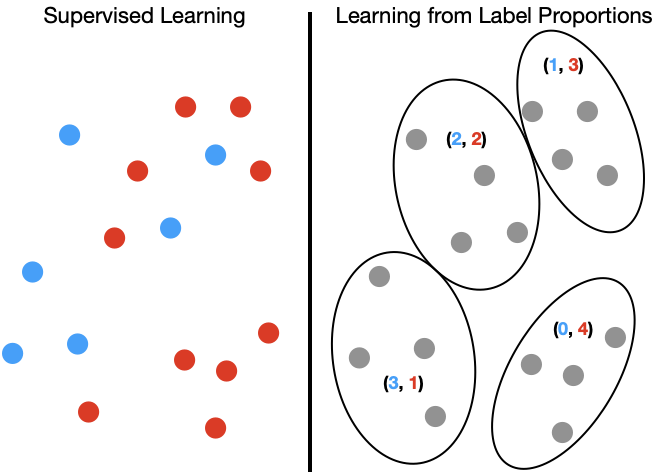}
\caption{Comparison of supervised learning and LLP. In supervised learning the learner observes each individual label red or blue. In LLP, the learner observes bags of unlabeled examples and the total number of blue and red labels in each bag.\label{f:llp}}
\end{figure}

As a weakly supervised learning paradigm, LLP  traces back to at least \cite{dfk05,mco07,QuadriantoSCL08, rue10,yu+13}, and was motivated there by learning scenarios where access to individual examples is often not available. A paradigmatic example is perhaps a political campaign trying to predict the preference of the electorate. Since voting is anonymous, a political analyst may not be able to observe individual votes, yet, they have access to aggregate voting preferences at the district level.

The problem has received renewed interest more recently~(e.g., \cite{dulac2019deep,lu+19,ScottZ20,NEURIPS2021_33b3214d,52071,lu+21,z+22}), driven by the desire to provide more privacy to user information. For instance the ad conversion reporting system proposed by Apple, SKAN, allows an ad tech provider to receive conversion information only aggregated across multiple impressions. This aggregation is intended to obfuscate individual activity. A similar API has also been proposed by Google Chrome and Android to report aggregate conversion information (e.g., \cite{privacy_sandbox}). Given the importance of conversion modeling for online advertising, learning how to train a model using only these aggregates has become a crucial task for many data-intensive online businesses. 

Research in LLP can be coarsely divided into two types of goals: Learning a bag classifier that correctly predicts label proportions, and learning an individual classifier that can correctly predict instance labels. The former has been the focus of most of the literature in this area. Representative papers include \cite{dfk05,mco07,yu+13}.
Nevertheless, it is known~\cite{yu+13,ScottZ20} that a good bag level predictor can result in a very bad instance level predictor. Finding a good instance level predictor using only label proportions is a harder problem, and solutions introduced so far require either making some assumptions on the data generation process \cite{ScottZ20,z+22} or on the model class \cite{QuadriantoSCL08}. Other solutions involve solving complex combinatorial problems \cite{dulac2019deep} or require that an example belongs to multiple bags (e.g., \cite{NEURIPS2021_33b3214d,52071}).

Our work falls in the category of learners that can produce event level predictions. Unlike previous research, we provide an extremely simple, yet powerful debiasing method which is applicable to the standard learning scenario of i.i.d. samples. Crucially, our proposed algorithm can be applied to virtually any model class against any loss function. We elucidate the flexibility of our approach by applying it to two widely interesting algorithmic techniques, Empirical Risk Minimization (ERM) and
Stochastic Gradient Descent (SGD), emphasize the theoretical underpinning of the resulting LLP methods, and complement our findings with an extensive experimental investigation on benchmark datasets.

\paragraph{Main contributions.} The contribution of our paper can be summarized as follow.
\begin{enumerate}
\item We provide a general debiasing technique for estimating the expected instance loss (or loss gradient) of an arbitrary model using only label proportions.

\item We provide a reduction of ERM with label proportions to that of ERM with individual labels, and show that when the learner observes bags of size $k$, the sample complexity of ERM with label proportions increases only by a factor of $k$.

\item Likewise, we provide an analysis of SGD using only label proportions and show that for bags of size $k$, its regret increases by only a factor of $k$.

\item We carry out an extensive set of experiments comparing our LLP methods to known methods available in the LLP literature, and show that on the tested datasets our methods perform as well as or better than the competitors when evaluated at the instance level.
\end{enumerate}

\paragraph{Related work.}
Interest in LLP traces back to at least \cite{dfk05,mco07,QuadriantoSCL08, rue10,10.1007/978-3-642-23808-6_23,yu+13,pat+14}. The literature in recent years has become quite voluminous, so we can hardly do justice of it here. In what follows, we comment on contrast to the references from which we learned about LLP problems.

In \cite{dfk05} the authors consider a hierarchical model that generates labels according to the given label proportions and proposed an MCMC-based inference scheme which, however, is not scalable to large training sets. \citet{mco07} show how standard supervised learning algorithms (like SVM and $k$-Nearest Neighbors) can be adapted to LLP by a reformulation of their objective function. In \cite{QuadriantoSCL08}, the authors propose a theoretically-grounded way of estimating the mean of each class through the sample average of each bag, along with the associated label proportion. The authors make similar assumptions to ours, in that the class-conditional distribution of data is independent
of the bags. Yet, their estimators rely on very strong  assumptions, like conditional exponential models, which are not a good fit to nowadays Deep Neural Network (DNN) models. Similar limitations are contained in \cite{pat+14}.
\citet{rue10} proposes an adaptation of SVM to the LLP setting, but this turns out to be restricted to linear models in some feature space. Similar limitations are in the $\alpha$-SVM method proposed by \citet{yu+13}, the the non-parallel SVM formulation of \citet{qi+17}, and the pinball loss SVM in \cite{shi+19}.
The original $\alpha$-SVM formulation was extended to other classifiers; e.g., \citet{lt15} extend the formulation to CNNs with a generative model whose associated maximum likelihood estimator is computed via Expectation Maximization, which turns out not to be scalable to sizeable DNN architectures. \citet{10.1007/978-3-642-23808-6_23} propose a method based on $k$-means to identify a clustering of the data which is compatible with the label proportions, but their method suffers from an extremely high computational complexity.  

Many of these papers are in fact purely experimental in nature, and their main goal is to adapt the standard supervised learning formulation to an LLP formulation so as to obtain a bag level predictor. 

On the learning theory side, besides the already mentioned \cite{QuadriantoSCL08,pat+14}, are the efforts contained in \cite{NEURIPS2021_33b3214d,52071}, the task of learning from multiple unlabeled datasets considered in \cite{lu+19,lu+21}, and the statistical learning agenda pursued in \cite{ScottZ20, z+22} (and references therein from the same authors). In \cite{NEURIPS2021_33b3214d,52071} the author is essentially restricting to linear-threshold functions and heavily relies on the fact that an example can be part of multiple bags, while we are working with non-overlapping i.i.d. bags and general model classes. In \cite{lu+19,lu+21} the authors consider a problem akin to LLP. Similar to our paper, the authors generate surrogate labels and propose a debiasing procedure via linear transformations. Yet, the way the solve the debiasing problem forces them to impose further restrictions on the bags, like the separation of the class prior distributions across different bags. The bags proposed in our setup are drawn i.i.d. from the same distribution, which is a scenario where many of these algorithms would fail. Moreover, the convergence results to the event level performance are only proven with families of loss function (e.g., proper loss functions).
\citet{ScottZ20} introduced a principled approach to LLP based on a reduction to learning with label noise. As in \cite{lu+19}, their basic strategy is to pair bags, and view each pair as a task of learning with label noise, where label proportions are related to label flipping probabilities.
The authors also established generalization error guarantees at the event level, as we do here.
\cite{z+22} extend their results to LLP for multiclass classification.
From a technical standpoint, these two papers have similar limitations as \cite{lu+19,lu+21}. Besides, the risk measure they focus on is balanced risk rather than classification risk, as we do here.

In our experiments (Section \ref{sec:experiments}), we empirically compare to the MeanMap method from \cite{QuadriantoSCL08}, and to a label generation approach from \cite{dulac2019deep}, the latter viewed as representative of recent applications of DNNs to LLP.

\section{Setup and Notation}\label{sec:setup}
Let $\cX$ denote a feature (or instance) space and $\cY = \{0, 1\}$ be a binary label space. We assume the existence of a joint distribution $\cD$ on $\cX \times \cY$, and let $p = \PP_{(x,y) \sim \cD}(y = 1)$ denote the probability of drawing a sample $(x,y) \in \cX \times \cY$ from $\cD$ with label $y = 1$. For a natural number $n$, let $[n] = \{i \in \mathbb{N} \colon i \leq n\}$.

A labeled {\em bag} of size $k$ is a sample $\bag = \{x_1,\ldots,x_k\}$, together with the associated {\em label proportion} $\alpha(\bag) = \frac{1}{k} \sum_{j=1}^k y_{j}$, where $(x_1,y_1), \ldots, (x_k,y_k)$ are drawn i.i.d. according to $\cD$.
We assume the learner has access to a collection $\mathcal{S} = \{ (\bag_i, \alpha_i),\,i \in [n]\}$ of $n$ labeled bags of size $k$, where $\bag_i = \{x_{ij} \colon j \in [k]\}$, $\alpha_i = \alpha(\bag_i) = \frac{1}{k} \sum_{j=1}^k y_{ij}$ is the label proportion of the $i$-th bag, and all the involved samples $(x_{ij},y_{ij})$ are drawn i.i.d. from $\cD$. In words, the learner receives information about the $nk$ labels $y_{ij}$ of the $nk$ instances $x_{ij}$ only in the aggregate form determined by the $n$ label proportions $\alpha_i$ associated with the $n$ labeled bags $(\bag_i,\alpha_i)$ in collection $\mathcal{S}$. Notice, however, that the instances $x_{ij}$ are individually observed.

Given a hypothesis set $\hyps$ of functions $h$ mapping $\cX$ to a prediction space $\widehat{\cY}$, and a loss function $\ell~\colon \widehat{\cY}~\times~\cY~\to~\Rset$, the learner receives a collection $\mathcal{S}$, and tries to find a hypothesis $h \in \hyps$ with the smallest \emph{population loss} (or {\em risk})
\(
  \popl(h) = \E_{(x,y) \sim\cD}[\ell(h(x), y) ]
\)
with high probability over the random draw of $\mathcal{S}$.
When clear from the surrounding context, we will omit subscripts like “$(x,y)\sim\mathcal{D}$" or “$\mathcal{D}$" from probabilities and expectations.

We shall consider two broadly used learning methods for solving the above learning problem, Empirical Risk Minimization (ERM, Section \ref{s:erm}), or regularized versions thereof, and Stochastic Gradient Descent (SGD, Section~\ref{s:sgd}).
In this latter context, we will consider a parameter space $\cW$ and consider a learner that tries to optimize a loss $\ell \colon \cW \times \cX \times \cY \to \Rset$ iteratively over a collection of bags.

\section{Surrogate labels}\label{sec:surrogate}
%
We now introduce the main tool for our algorithms for learning with label proportions: surrogate labels. The objective of using surrogate labels is to easily transform a label proportion problem into a traditional learning scenario with one label per example. While the idea of surrogate labels is natural, we show that na\"ively using them can result in very unstable learning algorithms. Our main contribution is thus to understand how to reduce this instability.
%
\begin{definition}
\label{def:surrogate}
Let $(\bag,\alpha)$ be a labeled bag,
and let $x \in \bag$. A surrogate label $\ty \in \{0,1\}$ for $x$ is a sample from the distribution $\bernoulli(\alpha)$.
\end{definition}

\subsection{Learning with surrogate labels}\label{ss:erm}

By using surrogate labels we may generate a new sample $\tilde{\cS}$ of individually label examples: $(x_{ij}, \ty_{ij})$ where $\ty_{ij} \sim \bernoulli(\alpha_i)$ is drawn independently for each $j \in [k]$. 

At first impression, the surrogate labels may seem as a poor replacement of the true labels. After all, for a bag of size $k$, the probability that a surrogate label $\ty_{ij}$ matches $y_{ij}$ can be as low as $\frac{1}{k}$. The following proposition shows that evaluating a function using surrogate labels is indeed a bad proxy for using true labels. However, we shall see that a simple linear correction can yield a much better approximation.\footnote
{
All omitted  proofs are contained in the appendix.
}

\begin{restatable}{proposition}{propBias}
\label{prop:bias}
Given a sample $(x_1,y_1),\ldots, (x_k,y_k)$ drawn i.i.d. according to $\cD$, let $(\bag, \alpha)$ be the corresponding labeled bag of size $k$, for some $k \geq 1$. Let $\ty_1, \ldots, \ty_k$ denote the surrogate labels sampled according to Definition~\ref{def:surrogate}.
Let $g\colon \cX \times \cY \to \Rset^d$ be any (measurable) function, for some output dimension $d \geq 1$. Then for all $j$ it holds that
\begin{align}
    \E[g(x_j, \ty_j)] &= \frac{1}{k} \E[g(x_j, y_j)] + \frac{(k\!\!-\!\!1)(1\!\!-\!\!p)}{k} \E[g(x_j,0)]  \notag\\
    &\qquad\qquad + \frac{(k-1)p}{k}\E[g(x_j, 1)]\,, \label{tl:surrogate}
\end{align}
where the expectation on the left-hand side is taken over the original sample $(x_1,y_1),\ldots, (x_k,y_k)$, and the randomness in the generation of the surrogate label $\ty_j$ (which is a random function of $\alpha = \frac{1}{k}\sum_{j=1}^k y_j$). 
\end{restatable}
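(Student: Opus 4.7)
The plan is to compute the expectation by first conditioning on all the labels in the bag and then decomposing the label proportion $\alpha$ into the contribution of the index $j$ and the contributions of the other $k-1$ indices.

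First I would use the tower rule. Conditional on $(x_1, y_1), \ldots, (x_k, y_k)$, the surrogate label $\ty_j$ is $\bernoulli(\alpha)$-distributed, so
\[
\E[g(x_j, \ty_j) \mid x_1, y_1, \ldots, x_k, y_k] = \alpha\, g(x_j, 1) + (1-\alpha)\, g(x_j, 0).
\]
Taking an outer expectation reduces the problem to evaluating $\E[\alpha\, g(x_j,1)] + \E[(1-\alpha)\, g(x_j,0)]$.

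The next step is to split $\alpha = \frac{1}{k} y_j + \frac{1}{k} \sum_{i \neq j} y_i$, so that the $j$-th summand is treated separately from the rest. For the ``other'' indices $i \neq j$, the pair $(x_i, y_i)$ is independent of $x_j$ by the i.i.d.\ assumption; consequently $\E[y_i\, g(x_j, 1)] = p\, \E[g(x_j,1)]$ and $\E[(1-y_i)\, g(x_j, 0)] = (1-p)\, \E[g(x_j,0)]$. Summing the $k-1$ such terms produces the coefficients $\frac{(k-1)p}{k}$ and $\frac{(k-1)(1-p)}{k}$ in the target identity.

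Finally, for the $j$-th summand, the key identity to exploit is the pointwise equality $y_j\, g(x_j,1) + (1-y_j)\, g(x_j,0) = g(x_j, y_j)$, which holds because $y_j \in \{0,1\}$. Taking expectation yields the $\frac{1}{k}\E[g(x_j, y_j)]$ term and completes the decomposition. There is no real obstacle here; the only thing to keep track of carefully is the separation of $y_j$ (which is correlated with $x_j$) from the remaining $y_i$'s (which are not), since this is exactly what allows the ``self'' contribution to collapse into $g(x_j, y_j)$ while the ``other'' contributions marginalize to $p$ and $1-p$.
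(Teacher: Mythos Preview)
Your proof is correct and follows essentially the same decomposition as the paper: separate the contribution of index $j$ from the other $k-1$ indices, use independence for the latter, and collapse the former via $y_j\,g(x_j,1)+(1-y_j)\,g(x_j,0)=g(x_j,y_j)$. The only cosmetic difference is that the paper reaches the sum over indices by observing that $\ty_j\sim\bernoulli(\alpha)$ has the same distribution as $y_I$ for $I$ uniform on $[k]$, giving $\E[g(x_j,\ty_j)]=\frac{1}{k}\sum_{i}\E[g(x_j,y_i)]$ directly, whereas you pass through the intermediate expression $\alpha\,g(x_j,1)+(1-\alpha)\,g(x_j,0)$.
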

The above proposition shows that we can easily obtain an unbiased estimate of the expectation of any function $g$ by applying a simple linear transformation to the output of $g(x_j, \ty_j)$.
\begin{restatable}{corollary}{corDebiasedLoss}
\label{cor:debiasedLoss}
With the notation of Proposition~\ref{prop:bias}, for any function $g \colon \cX \times \cY \to \Rset^d$ define $\tg \colon \cX \times \cY \to \Rset^d$ as:
\begin{align*}
    \tg(x, \ty) &=
    k g(x, \ty)
    - (k\!\!-\!\!1)(1\!\!-\!\!p) g(x, 0) - (k\!\!-\!\!1)p g(x, 1)\,,
\end{align*}
Then for an element $x \in \bag$ we have
\begin{equation*}
    \E[\tg(x, \ty)] = \E_{(x,y)\sim \cD}[g(x, y)]\,,
\end{equation*}
where the expectation on the left is taken over $(\bag, \alpha)$ and the choice of the surrogate label $\ty$.
\end{restatable}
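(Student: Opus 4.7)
The proof is essentially a direct algebraic consequence of Proposition~\ref{prop:bias}, so no new machinery is required. The plan is to fix an index $j \in [k]$ and expand $\E[\tg(x_j, \ty_j)]$ using linearity of expectation, then substitute the identity from Proposition~\ref{prop:bias} and observe that the correction terms cancel exactly.

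Concretely, by the definition of $\tg$ and linearity,
\begin{align*}
\E[\tg(x_j,\ty_j)] &= k\,\E[g(x_j,\ty_j)] - (k-1)(1-p)\,\E[g(x_j,0)] \\
&\quad - (k-1)p\,\E[g(x_j,1)].
\end{align*}
Now I would plug in the expression for $\E[g(x_j,\ty_j)]$ provided by Proposition~\ref{prop:bias}, which, when multiplied by $k$, yields $\E[g(x_j,y_j)] + (k-1)(1-p)\E[g(x_j,0)] + (k-1)p\E[g(x_j,1)]$. The last two summands cancel with the subtracted terms, leaving $\E[g(x_j,y_j)]$. Since the samples $(x_j,y_j)$ are drawn i.i.d.\ from $\cD$, this equals $\E_{(x,y)\sim \cD}[g(x,y)]$, which is exactly the right-hand side of the claim.

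The only thing to be slightly careful about is the interpretation of the expectation on the left-hand side of the corollary: it is jointly over the draw of the bag $(\bag,\alpha)$ (equivalently, over all $k$ underlying i.i.d.\ samples that produce the bag), the choice of the element $x \in \bag$ (here represented as a fixed coordinate $j$), and the conditional Bernoulli draw of $\ty$. This matches the expectation convention used in Proposition~\ref{prop:bias}, so no re-derivation of the probabilistic setup is needed.

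I do not anticipate any real obstacle: the entire argument is a single substitution followed by cancellation. The only reason to state it as a corollary rather than inline is to isolate the unbiased-estimator form of $\tg$, which is what downstream algorithms (ERM and SGD) will actually plug in.
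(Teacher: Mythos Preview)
Your proposal is correct and matches the paper's approach exactly: the paper presents this as an immediate corollary of Proposition~\ref{prop:bias} without a separate written proof, and your substitution-and-cancellation argument is precisely the intended one-line derivation.
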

We would like to highlight the importance of this corollary. While there has been a lot of research in LLP; to the best of our knowledge this is the first expression that shows that one can recover an unbiased estimate of an arbitrary function $g$ using only information from label proportions. 

While the above corollary provides us with a straightforward way to estimate the expectation of a function $g$ (which can be for instance specialized to a loss function), note that the variance of $\tg$ increases as the number of elements in each bag grows. Indeed, since all terms in the definition of $\tg$ have a factor of $k$, we can expect the variance of the estimator to grow as $k^2$. This variance may be prohibitively high even for moderate values of $k$. This is illustrated in \Cref{fig:variance_comparison} on a simple case. We empirically compare the variance of four estimates of $\E[g(x,y)]$. The first two  are based on surrogate labels, either using one surrogate sample per bag (``surrogate one") or $k$ surrogate samples per bag (``surrogate avg."). The remaining two use soft label corrected samples (see next section), either based on a single sample (``derandomized one") or $k$ samples (``derandomized avg.").

In order to make our estimators useful for learning, we need to figure out a way to reduce their variance. We next show that by averaging the estimator within a bag and using a 
simple derandomization of the surrogate labels, we can drastically reduce the variance by a factor of $k$.

\subsection{Derandomized Loss Estimates}
In this section we show that by replacing the surrogate label by its expectation we can reduce the variance of an estimator $\tg$. We begin by the simple observation (which follows from a simple case analysis and easy algebraic manipulations) that $\tg$ from Corollary \ref{cor:debiasedLoss} can equivalently be expressed as
\begin{align}
    \tg(x, \ty) &= \left(k (\ty - p) + p\right)g(x, 1) \nonumber\\
    & \qquad + \left(k (p - \ty) + (1 - p)\right) g(x, 0).\label{eq:tlsimple}
\end{align}
Expressed in this form, we see that the dependence on the surrogate label $\ty$ is linear. Thus taking expectation with respect with the choice of the surrogate label should be fairly simple.
\begin{definition}
With the notation of Proposition~\ref{prop:bias}, we overload the definition of $\tg$ and define a \emph{soft label corrected function} $\tg \colon \cX \times [0,1] \to \Rset^d$ for example $x_j \in \bag$  as
\begin{align}
\tg(x_j, \alpha) &= \E[\tg(x_j, \ty_j) | \bag, \alpha] \nonumber \\
&=  \left(k(\alpha - p) + p\right) g(x_j, 1) \nonumber \\
 & \qquad + \left(k(p - \alpha) + (1- p)\right) g(x_j, 0).
     \label{eq:simple_tlalpha}
\end{align}
\end{definition}
It is important to notice that even if the function $g$ is defined over $\cX \times \cY$, the soft label corrected function $\tg$ is properly defined over $\cX \times [0,1]$. 
The following lemma shows that, for any underlying function $g$, the soft label corrected function $\tg$ remains an unbiased estimate for the expectation of $g$. More importantly, we also demonstrate that the variance of the estimator based on the soft label corrected function
is always smaller than the variance of the one based on 
surrogate labels. 
\begin{lemma}\label{l:derand}
    Let $(\bag, \alpha)$ be a labeled bag of size $k$ sampled from $\cD$. Let $g\colon \cX \times \cY \to \Rset^d$ be  an arbitrary function and $\tg$ denote its corresponding soft label corrected function. Then for $x_j \in \bag$ we have
    \[
    \E_{\bag, \alpha}[\tg(x_j, \alpha)] = \E_{(x, y) \sim \cD}[ g(x, y) ]~.
    \]
    Moreover, if $\bag = \{x_1, \dots, x_k\}$, and $\ty_1, \dots, \ty_k \sim \bernoulli(\alpha)$ are i.i.d. surrogate labels then
    \[
    \E\left[
        \Bigl \|\frac{1}{k} \sum_{j=1}^k \tg(x_j, \ty_j) \Bigl\|^2
    \right]
    \geq
    \E\left[
        \Bigl \|\frac{1}{k} \sum_{j=1}^k \tg(x_j, \alpha) \Bigl\|^2
    \right]~.
    \]
\end{lemma}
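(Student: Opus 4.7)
The plan is to recognize that $\tg(x_j, \alpha)$ is literally a conditional expectation of $\tg(x_j, \ty_j)$, and then obtain both claims from standard properties of conditional expectations: the tower property for unbiasedness and conditional Jensen's inequality (Rao--Blackwellization) for variance reduction.

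First I would handle unbiasedness. By the definition of the soft label corrected function, $\tg(x_j, \alpha) = \E[\tg(x_j, \ty_j) \mid \bag, \alpha]$. Applying the tower property and then Corollary~\ref{cor:debiasedLoss} gives
\begin{equation*}
    \E_{\bag, \alpha}[\tg(x_j, \alpha)]
    = \E\bigl[\E[\tg(x_j, \ty_j) \mid \bag, \alpha]\bigr]
    = \E[\tg(x_j, \ty_j)]
    = \E_{(x,y)\sim\cD}[g(x,y)]\,,
\end{equation*}
where the last equality is exactly Corollary~\ref{cor:debiasedLoss}. A one-line sanity check is to verify that the linearity of $\tg(x_j, \ty_j)$ in $\ty_j$ (visible from \eqref{eq:tlsimple}) together with $\E[\ty_j \mid \bag, \alpha] = \alpha$ recovers the formula \eqref{eq:simple_tlalpha}, confirming that the conditional expectation is indeed $\tg(x_j, \alpha)$.

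For the variance claim, set $Z_j = \tg(x_j, \ty_j)$ and $W_j = \tg(x_j, \alpha)$, and let $\mathcal{F} = \sigma(\bag, \alpha)$. Since the surrogate labels $\ty_1, \dots, \ty_k$ are conditionally independent given $\mathcal{F}$, and since each $Z_j$ is linear in $\ty_j$ with $\E[\ty_j \mid \mathcal{F}] = \alpha$, we have
\begin{equation*}
    \E\Bigl[\tfrac{1}{k}\sum_{j=1}^k Z_j \,\Bigm|\, \mathcal{F}\Bigr]
    = \tfrac{1}{k}\sum_{j=1}^k W_j\,.
\end{equation*}
Now apply the conditional Jensen inequality to the convex function $v \mapsto \|v\|^2$:
\begin{equation*}
    \Bigl\|\tfrac{1}{k}\sum_{j=1}^k W_j\Bigr\|^2
    = \Bigl\| \E\bigl[\tfrac{1}{k}\sum_{j=1}^k Z_j \,\big|\, \mathcal{F}\bigr] \Bigr\|^2
    \leq \E\Bigl[\,\Bigl\|\tfrac{1}{k}\sum_{j=1}^k Z_j \Bigr\|^2 \,\Bigm|\, \mathcal{F}\Bigr]\,.
\end{equation*}
Taking the outer expectation and invoking the tower property once more yields the stated inequality.

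I do not expect any real obstacle: both statements are immediate once one identifies the right conditioning $\sigma$-algebra and uses the fact that $\tg$ is affine in its second argument. The only bookkeeping to be careful with is that $Z_j$ also depends on $\bag$ through $x_j$ and through the appearance of $\alpha$ inside $\tg$ (which is why one conditions on the full pair $(\bag, \alpha)$, not just on $\alpha$), so that the conditional expectation collapses exactly to $W_j$ rather than leaving residual randomness.
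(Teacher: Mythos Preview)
Your proposal is correct and follows essentially the same route as the paper. The paper proves unbiasedness by the identical tower-property chain through Corollary~\ref{cor:debiasedLoss}, and establishes the norm inequality by writing $\E[\|U\|^2] = \E[\operatorname{tr}(\cov[U\mid\bag,\alpha])] + \E[\|\E[U\mid\bag,\alpha]\|^2]$ and dropping the nonnegative trace term---which is just the conditional Jensen inequality for $\|\cdot\|^2$ you invoke, spelled out. (A minor remark: the conditional independence of the $\ty_j$'s that you mention is not actually used in your argument; linearity of conditional expectation alone gives $\E[\tfrac{1}{k}\sum Z_j\mid\mathcal{F}]=\tfrac{1}{k}\sum W_j$.)
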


\begin{proof}
From Corollary~\ref{cor:debiasedLoss}, the definition of the soft label corrected function and the properties of conditional expectation we have, for $x_j \in \bag$:
\begin{align*}
    \E_{(x,y) \sim \cD}[g(x,y)] = \E_{\bag, \alpha}[\tg(x_j, \ty_j)]
    &= \E_{\bag, \alpha}\left[\E[\tg(x_j, \ty_j) | \bag, \alpha]\right]\\
    &= \E_{\bag, \alpha}[\tg(x_j, \alpha)].
\end{align*}
To prove the norm inequality, let  $U = \sum_{j=1}^k \tg(x_j, \ty_j)$. Note that $\langle U, U \rangle = \left\|\sum_{j=1}^k \tg(x_j, \ty_j)\right\|^2$. By manipulating the expectation of this inner product and the properties of the conditional expectation we have:
    \begin{align*}
        \E\left[\langle U, U\rangle\right] = \E\left[\E\left[\langle U, U \rangle | \bag, \alpha\right] - \left\|\E[ U  | \bag, \alpha]\right\|^2 \right] \\
        + \left\|\E[ U  | \bag, \alpha]\right\|^2~.
    \end{align*}
    Now, notice that 
    $$\E\left[\langle U, U \rangle | \bag, \alpha\right] - \left\|\E[ U  | \bag, \alpha]\right\|^2  = \text{tr}(\cov[U | \bag, \alpha]) \geq 0$$
    since the covariance matrix of a vector-valued random variable is positive semi-definite. By using the definition of $U$ and dividing both sides by $k^2$ gives the claimed result.
\end{proof}

The above lemma shows that using the soft label corrected function indeed reduces the variance of our estimator. However, \eqref{eq:simple_tlalpha} still has a dependence on $k$ which at first should make variance increase like $\Omega(k^2)$. Notice however that because $\alpha = \frac{1}{k}\sum_{j=1}^k y_j$, and each $y_j \sim \bernoulli(p)$,  we expect by standard concentration arguments that $k(\alpha - p) \in O(\sqrt{k})$ which should imply that the variance scales like $k$. This would be a significant variance reduction compared to the use of surrogate labels. 
The following theorem shows that indeed, the variance of these estimates is asymptotically $k$ and not $k^2$.

\begin{figure}
    \centering
    \includegraphics[width=0.55\columnwidth]{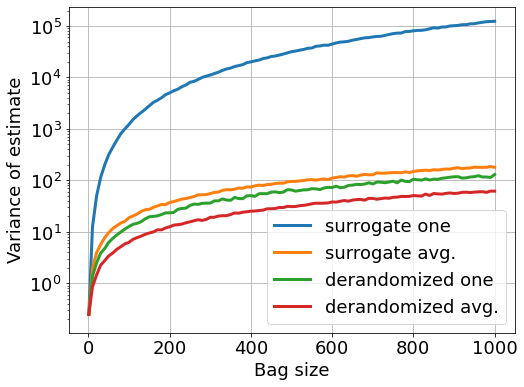}
    \caption{Comparison of the variance of LLP estimates using either surrogate labels or derandomization, and either averaging over the bag or using a single estimate per bag.
    This plot was generated using $g(x,y) = \ind\{x \leq 0.5\} y$ and with a data distribution $\cD$ that samples $x$ uniformly from $[0,1]$ and sets $y = \ind\{x \leq 0.5\}$.
    }
    \label{fig:variance_comparison}
\end{figure}

\begin{restatable}{theorem}{thmGradientSquared}
\label{t:gradient_squared}
Let $g \colon \cX \times \cY \to \Rset^d$ be such that $\sup_{x,y} \|g(x,y)\|^2\leq M$, and denote by $\tg$ its corresponding soft labeled corrected function. 
For each $j \in [k]$, let 
$\tilde g_j = \tilde g(x_j,\alpha)$.
Then, for any size $k \geq 1$ and any $j \in [k]$,
\begin{align}
\E\left[\Bigl|\Bigl|\frac{1}{k} \sum_{i=1}^k\tilde g_i\Bigl|\Bigl|^2\right] \leq \E[||\tilde g_j||^2]~.\label{eq:grad}
\end{align}
Moreover, denoting for brevity $g_0 = g(x,0)$ and $g_1 = g(x,1)$, there exist universal constants $C_1, C_2$ such that 
\begin{align*}
\E[||\tilde g_j||^2] &\leq C_1 + kp(1-p) \E\Bigl[||g_0-g_1||^2 \Bigl]~, \\
\E\left[\Bigl|\Bigl|\frac{1}{k} \sum_{i=1}^k\tilde g_i\Bigl|\Bigl|^2\right] 
&\leq C_2 +  kp(1-p)\Bigl|\Bigl|\E[g_0-g_1]\,\Bigl|\Bigl|^2~,
\end{align*}
where $p = \PP_{(x,y) \sim \cD}(y = 1)$.
\end{restatable}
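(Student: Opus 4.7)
The plan is to exploit the linear representation $\tg_j = g_0(x_j) + A\,(g_1(x_j) - g_0(x_j))$ from \eqref{eq:simple_tlalpha}, where $g_0(x) := g(x,0)$, $g_1(x) := g(x,1)$, and $A := k(\alpha - p) + p$ is a scalar \emph{shared across all} $j$. Note that $k\alpha \sim \binomial(k,p)$, so $\E A = p$ and $\var(A) = k p(1-p)$. Averaging over the bag gives $\tfrac{1}{k}\sum_j \tg_j = \bar g_0 + A \bar h$, with $\bar g_0 := \tfrac{1}{k}\sum_j g_0(x_j)$ and $\bar h := \tfrac{1}{k}\sum_j (g_1(x_j) - g_0(x_j))$. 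The first inequality \eqref{eq:grad} follows by applying Jensen's inequality to the convex map $v \mapsto \|v\|^2$, namely $\|\tfrac{1}{k}\sum_i \tg_i\|^2 \leq \tfrac{1}{k}\sum_i \|\tg_i\|^2$, then taking expectations and invoking exchangeability of $(\tg_1,\ldots,\tg_k)$ (the $(x_i,y_i)$ are i.i.d.\ and $A$ is symmetric in the $y_i$'s), which makes $\E\|\tg_i\|^2$ the same for every $i$ and collapses the average.

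\textbf{Bound on $\E\|\tg_j\|^2$.} The algebraic key is the decomposition $A = y_j + (k-1)(\alpha^{(j)} - p)$, where $\alpha^{(j)} := \tfrac{1}{k-1}\sum_{i \neq j} y_i$ is \emph{independent} of $(x_j, y_j)$ with $\E\alpha^{(j)} = p$ and $\E[(\alpha^{(j)} - p)^2] = p(1-p)/(k-1)$. I would expand $\|\tg_j\|^2 = \|g_0(x_j)\|^2 + 2 A \langle g_0(x_j), g_1(x_j) - g_0(x_j)\rangle + A^2 \|g_1(x_j) - g_0(x_j)\|^2$, substitute the decomposition of $A$, and take expectations. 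Terms linear in $\alpha^{(j)} - p$ vanish by independence and mean-zero; the $(k-1)^2(\alpha^{(j)} - p)^2$ piece of $A^2$ yields exactly $(k-1)p(1-p)\,\E\|g_1 - g_0\|^2$; and all remaining contributions only couple the bounded variable $y_j$ with $g_0(x_j)$ and $g_1(x_j) - g_0(x_j)$ (each of squared norm $\leq 4M$), so they are absorbed into the constant $C_1$.

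\textbf{Bound on $\E\|\tfrac{1}{k}\sum_j \tg_j\|^2$: the main obstacle.} The crux is to show that after averaging, the $k$-linear term carries $\|\E[g_0 - g_1]\|^2$ rather than $\E\|g_0 - g_1\|^2$. Expanding $\|\bar g_0 + A\bar h\|^2 = \|\bar g_0\|^2 + 2A\langle \bar g_0, \bar h\rangle + A^2 \|\bar h\|^2$, the first term is $\leq M$ by Jensen, and the middle term is $O(1)$ by a mean-zero cancellation argument analogous to the previous step. The dominant task is to control $\E[A^2 \|\bar h\|^2]$. I would write $\|\bar h\|^2 = k^{-2}\sum_{l,m}\langle h(x_l), h(x_m)\rangle$ with $h(x) := g_1(x) - g_0(x)$, and split by whether $l = m$ or $l \neq m$. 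The $k$ diagonal terms each obey the same $O(1) + O(k p(1-p))$ bound as $\E\|\tg_j\|^2$, so after the $1/k^2$ normalization they contribute only $O(1)$. For each off-diagonal pair I would use the finer decomposition $A = p + u_l + u_m + S^{(l,m)}$, where $u_i := y_i - p$ and $S^{(l,m)} := \sum_{i \neq l,m} u_i$ is \emph{independent} of $(x_l, y_l, x_m, y_m)$ with $\E[(S^{(l,m)})^2] = (k-2)p(1-p)$. Using also that $(x_l, y_l)$ and $(x_m, y_m)$ are themselves independent for $l \neq m$, expanding $A^2$ and taking expectations yields a single dominant contribution $(k-2)p(1-p)\,\langle \E h, \E h\rangle = (k-2)p(1-p)\|\E h\|^2$; every other term in the expansion is $O(1)$ because it is an inner product between two independent bounded quantities multiplied by bounded scalars. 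Summing the $k(k-1)$ off-diagonal pairs and dividing by $k^2$ produces the desired $kp(1-p)\|\E[g_0 - g_1]\|^2$ term plus an $O(1)$ remainder that is absorbed into $C_2$. The main obstacle is precisely this bookkeeping: one must disentangle the "bag-wide" piece $S^{(l,m)}$ of $A$ (which provides the $k$-growth and sees only independent factors) from the "local" pieces $u_l, u_m, p$ (which must contribute only $O(1)$ after the $1/k^2$ averaging), so as to avoid picking up a spurious $k\,\E\|h\|^2$ term.
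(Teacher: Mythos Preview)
Your proposal is correct and follows essentially the same route as the paper: the paper likewise isolates a centered binomial piece of $A$ that is independent of the relevant indices (writing $A = y_i + B$ for the single-term bound and $A = (y_i + y_j - p) + B$ for the cross term $\E\langle\tg_i,\tg_j\rangle$), and combines the diagonal and off-diagonal contributions via $\E\bigl\|\tfrac{1}{k}\sum_i \tg_i\bigr\|^2 = \tfrac{1}{k}\E\|\tg_1\|^2 + \tfrac{k-1}{k}\E\langle\tg_1,\tg_2\rangle$. Your Jensen-plus-exchangeability argument for \eqref{eq:grad} is a slightly cleaner variant of the paper's step $\E\langle\tg_1,\tg_2\rangle \le \E\|\tg_1\|^2$ (obtained from $0 \le \E\|\tg_1-\tg_2\|^2$), and your choice to first pull out $\bar g_0$ before analyzing $A^2\|\bar h\|^2$ is a harmless reorganization of the same computation.
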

The bound in the above theorem confirms our intuition. Moreover, it shows that the variance grows slower for datasets where $p$ is close to $1$ or $0$. This is intuitively clear, for very skewed datasets, we expect label proportions to provide a better description of the true labels. In the extreme cases where $p = 0$ or $p=1$, LLP becomes equivalent to learning from individual examples. 

The results of this section have demonstrated that for any function $g$, one can obtain an estimator of its expectation using only label proportions. More importantly the variance of this estimator only scales linearly with the bag size.

\paragraph{Note about knowledge of population level positive rate.} At this point the reader is aware that the definition of the soft label corrected function requires knowledge of the population level positive rate $p$. While the exact value of $p$ is unknown, one can easily estimate it from the label proportions itself. Indeed, using the fact that the generated bags are i.i.d. it is easy to see that 
\(
    \widehat{p} = \frac{1}{n} \sum_{i=1}^n \alpha_i
    = \frac{1}{nk}\sum_{i,j}y_{ij}
\)
is a very good estimator for $p$. 

\paragraph{\easyllp.} We now have all elements to introduce the \easyllp\ framework for learning from label proportions. The framework consists of specializing the function $g$ for particular learning tasks. Two notable instantiations of \easyllp\, which will analyze in further sections are empirical risk minimization (ERM) and stochastic gradient descent (SGD). For ERM, given a hypothesis $h$ and a loss function,  we let $g_h(x, y) = \ell(h(x), y)$ and the corresponding soft label corrected loss $\tL(h(x), \alpha) = \tg_h(x, \alpha)$. To provide regret guarantees using SGD over bags in a parameter space $\cW$ and loss function $\ell \colon \cW \times \cX \times \cY \to \Rset$, we use \easyllp\, to estimate the gradient of the loss function with respect to a parameter $\bw \in \cW$ by letting $g_\bw(x, y) = \nabla_{\bw} \ell(\bw, x, y)$ and its corresponding soft label corrected function $\tg_\bw(x, \alpha) = \nabla \tL(\bw, x, \alpha)$.

\section{ERM with Label Proportions}\label{s:erm}
Given a hypothesis space $\hyps$, let $\ell$ be a loss function as defined in Section \ref{sec:setup}. 
Given a collection of bags
$\mathcal{S} = \{ (\bag_i, \alpha_i),\,i \in [n]\}$ of size $k$,
our learning algorithm simply finds $h \in \hyps$ that minimizes the empirical risk constructed via the soft label corrected loss:
\begin{equation}\label{e:ermsoftlabel}
     \sum_{i=1}^n \sum_{j=1}^k \tL(h(x_{ij}),\alpha_i).
\end{equation}
The main advantage of our algorithm lies in its simplicity and generality. Indeed, our algorithm can be used for any loss function and any hypothesis set. This is in stark contrast, e.g., to the work of \cite{QuadriantoSCL08}, whose framework is only applicable to the logistic loss and (generalized) linear models. From a practical standpoint, our approach can also leverage existing learning infrastructures, as the only thing that needs to be specified is a different loss function --- which in frameworks like Tensorflow, JAX and PyTorch requires only minimal coding. This differs from other approaches to assigning surrogate labels which may require solving combinatorial optimization problems like, e.g., \cite{dulac2019deep}.

The following theorem provides learning guarantees for minimizing the above empirical loss. Our guarantees are given in terms of the well-known Rademacher complexity of a class of functions.
\begin{definition}
Let $\mathcal{Z}$ be an arbitrary input space and let $\mathcal{G} \subset \{g \colon \mathcal{Z} \to \Rset\}$ be a collection of functions over $\mathcal{Z}$. Let $\cD$ be a distribution over $\mathcal{Z}$ and $\mathcal{S} = \{z_1, \ldots, z_m\}$ be an i.i.d. sample. The Rademacher complexity of $G$ is given by
\begin{equation*}
    \frac{1}{n}\rad_n(\mathcal{G}) = \E_{\mathcal{S}, \boldsymbol{\sigma}} \left[\sup_{g \in \mathcal{G}} \sum_{i=1}^n g(z_i) \sigma_i\right]
\end{equation*}
where $\boldsymbol{\sigma} = (\sigma_1, \ldots, \sigma_n) \in \{-1, 1\}^n$ is uniformly distributed. 
\end{definition}
\begin{restatable}{theorem}{thmUCB}
\label{thm:ucb}
Let $\delta > 0$, $\mathcal{S} = \{ (\bag_i, \alpha_i),\,i \in [n]\}$ be a collection of $n$ bags of size $k$.
Let $\sup_{\hy, y} \ell(\hy, y) \leq B$. Then the following bound holds uniformly for all $h \in \hyps$ with probability at least $1 - \delta$:
\begin{flalign*}
&\Bigl|\popl(h) - \frac{1}{nk}\sum_{i,j}\ell(h(x_{ij}), \alpha_i)\Bigl|\leq  &\nonumber \\ 
&C_n^k
\left(\rad_{kn}(\hyps_\ell^{(1)}) +
 \rad_{kn}(\hyps_\ell^{(0)})\right)  \nonumber 
 + \frac{4B}{n} + 4B \sqrt{\frac{k\log(2/\delta)}{2n}}~, \label{eq:ucb}
\end{flalign*}
where $C_n^k = 2\left(\sqrt{2 k \log (kn)} + 1 \right)$
, $\hyps_\ell^{(1)} = \{x \to \ell(h(x), 1) \colon h \in  \hyps\}$ and $\hyps_\ell^{(0)} = \{x \to \ell(h(x), 0) \colon h \in \hyps\}$. 
\end{restatable}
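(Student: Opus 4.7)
My plan is to prove the uniform convergence bound by combining Corollary~\ref{cor:debiasedLoss} (which makes $\hat L(h) := \tfrac{1}{nk}\sum_{ij}\tL(h(x_{ij}),\alpha_i)$ an unbiased estimator of $\popl(h)$) with a bag-level Rademacher argument, controlled by a good event that bounds the soft-label weights.

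First, using~\eqref{eq:simple_tlalpha} I decompose the soft label corrected loss as $\tL(h(x),\alpha_i) = A_i\,\ell(h(x),1) + B_i\,\ell(h(x),0)$, where the bag weights $A_i = k(\alpha_i - p) + p$ and $B_i = k(p - \alpha_i) + (1-p)$ depend only on the label proportion. The per-bag averages $\hat L_i(h) := \tfrac{1}{k}\sum_j \tL(h(x_{ij}),\alpha_i)$ are then i.i.d.\ across $i$ and, by Corollary~\ref{cor:debiasedLoss}, unbiased for $\popl(h)$. Next I introduce the good event $E = \{|\alpha_i - p| \le \sqrt{2\log(kn)/k} \text{ for all } i \in [n]\}$. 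By Hoeffding applied to $k\alpha_i = \sum_j y_{ij}$ followed by a union bound, $\PP(E^c) = O(1/n)$; on $E$ both $|A_i|$ and $|B_i|$ are at most $\sqrt{2k\log(kn)} + 1 = C_n^k/2$, so $|\hat L_i(h)| \le C_n^k B$ uniformly in $h$. The contribution of $E^c$ is absorbed into the additive $4B/n$ slack via the trivial per-bag boundedness.

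On $E$, I apply standard bag-level symmetrization to bound $\E[\sup_h |\hat L(h) - \popl(h)|]$ by $2\,\E[\sup_h |\tfrac{1}{n}\sum_i \sigma_i \hat L_i(h)|]$, substitute the decomposition, and use the triangle inequality on the sup to split the right-hand side into two weighted Rademacher processes of the form $\tfrac{1}{nk}\sum_{ij}\sigma_i A_i \ell(h(x_{ij}),1)$ and its $B_i$-counterpart. Since $|\sigma_i A_i| \le C_n^k/2$ on $E$, a Ledoux--Talagrand contraction applied bag-wise absorbs the weights and reduces each process to $C_n^k/2$ times an example-level Rademacher average over the $nk$ samples $\{x_{ij}\}$, yielding the leading $C_n^k\bigl(\rad_{kn}(\hyps_\ell^{(1)}) + \rad_{kn}(\hyps_\ell^{(0)})\bigr)$ term. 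For the high-probability passage, I use the variance bound of Theorem~\ref{t:gradient_squared} specialized to scalar $g = \ell$, which yields $\var(\hat L_i(h)) = O(kB^2)$; a variance-sensitive concentration inequality such as Bousquet's Talagrand-type bound (or Bernstein combined with symmetrization) then converts the expected deviation into the $4B\sqrt{k\log(2/\delta)/(2n)}$ high-probability fluctuation.

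The main obstacle is the contraction step: in $\sum_i \sigma_i A_i \bar\ell_1^{(i)}(h)$ the weights $A_i$ are random and the Rademacher signs $\sigma_i$ are shared across all $k$ examples of a bag, so the naive range of each summand is $k+1$ rather than $C_n^k/2 = \tilde O(\sqrt{k})$. The good-event truncation on $|\alpha_i - p|$ is precisely what turns the $k$ factor into a $\sqrt k$ factor, and is the source of the overall $O(\sqrt{k/n})$ rate, matching fully supervised learning up to a factor of $k$ in the number of required samples.
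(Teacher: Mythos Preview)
Your decomposition $\tL(h(x),\alpha_i)=A_i\,\ell(h(x),1)+B_i\,\ell(h(x),0)$, the good event $E=\{|\alpha_i-p|\le\sqrt{2\log(kn)/k}\ \forall i\}$, and the unbiasedness via Corollary~\ref{cor:debiasedLoss} are all correct and coincide with the paper's truncation device (Proposition~\ref{prop:truncated}). The genuine gap is the sentence ``a Ledoux--Talagrand contraction applied bag-wise absorbs the weights and reduces each process to $C_n^k/2$ times an \emph{example-level} Rademacher average over the $nk$ samples.'' Contraction on the bag-level process $\frac{1}{n}\sum_i\sigma_iA_i\bar\ell_i^{(1)}(h)$ only removes the scalar weight $A_i$; what remains is
\[
\frac{C_n^k}{2}\,\E_{\sigma}\Bigl[\sup_{h}\frac{1}{nk}\sum_{i,j}\sigma_i\,\ell(h(x_{ij}),1)\Bigr],
\]
where the \emph{same} sign $\sigma_i$ multiplies all $k$ examples in bag $i$. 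This is not $\rad_{kn}(\hyps_\ell^{(1)})$, which requires $nk$ independent signs $\sigma_{ij}$. In general the block-sign process dominates the independent-sign one: by Cauchy--Schwarz on increments (or Sudakov--Fernique for the Gaussian analogue) you only get an upper bound of order $\sqrt{k}\,\rad_{kn}$, which would inflate your leading term to $O(k)$ rather than the stated $O(\sqrt{k})$. Your last paragraph correctly identifies that the signs are shared, but then attributes the fix to the good-event truncation; truncation controls $|A_i|$, not the block structure of $\sigma$, so the obstacle is not actually removed.

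The paper avoids this by symmetrizing at the \emph{example} level from the start: it views the $nk$ pairs $(x_{ij},y_{ij})$ as the i.i.d.\ units, introduces independent $\sigma_{ij}$ in the ghost-sample argument, and only then applies contraction through the per-bag Lipschitz maps $\psi_i^{(r)}(z)=(k(\alpha_i-p)+\cdot)\ind_{|\alpha_i-p|<c}\,z$, arriving directly at $\rad_{kn}(\hyps_\ell^{(r)})$. Correspondingly, the high-probability term $4B\sqrt{k\log(2/\delta)/(2n)}$ comes from McDiarmid at the example level: changing a single $(x_{ij},y_{ij})$ perturbs $\alpha_i$ by at most $1/k$ and hence $L(h,\bag_i,\alpha_i)$ by at most $4B$, giving a bounded-difference constant of $4B/n$ over $nk$ coordinates. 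Your Bernstein/Bousquet route via the variance bound of Theorem~\ref{t:gradient_squared} would give the right $\sqrt{k/n}$ order but not the stated constants, and in any case it does not repair the Rademacher step. To salvage your outline you would need to replace the bag-level symmetrization by the paper's example-level one (or supply a separate argument bounding the block-sign complexity by $\rad_{kn}$ without losing $\sqrt{k}$).
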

\begin{corollary}
With the notation of the previous theorem, let $\widehat{h}$ denote the minimizer of (\ref{e:ermsoftlabel}).
Then with probability at least $1 - \delta$ over the sampling process we have:
\begin{equation*}
    \popl(\widehat{h}) \leq 
    \min_{h \in \hyps} \popl(h) + 2 \Gamma(k, n, \delta)~,
\end{equation*}
where $\Gamma(k, n, \delta) = C_n^k
\left(\rad_{kn}(H_\ell^{(1)}) +
 \rad_{kn}(H_\ell^{(0)})\right) 
   + \frac{4B}{n} + 4B \sqrt{\frac{k\log(2/\delta)}{2n}}$
\end{corollary}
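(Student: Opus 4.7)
The plan is to run the standard ERM oracle inequality template, instantiated with the uniform convergence statement of Theorem~\ref{thm:ucb}. Let $h^\star \in \arg\min_{h \in \hyps} \popl(h)$ and, for brevity, let $\widehat{\popl}(h) = \frac{1}{nk}\sum_{i,j}\tL(h(x_{ij}),\alpha_i)$ denote the empirical risk built from the soft label corrected loss (which is what $\widehat h$ minimizes by definition of \eqref{e:ermsoftlabel}). By Theorem~\ref{thm:ucb}, there is a single event $E$ of probability at least $1 - \delta$ on which, simultaneously for every $h \in \hyps$,
\[
  \bigl|\,\popl(h) - \widehat{\popl}(h)\,\bigr| \;\leq\; \Gamma(k,n,\delta).
\]
Throughout what follows I condition on $E$; the conclusion then holds with probability at least $1 - \delta$.

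Next I would apply the classical add-and-subtract decomposition
\[
  \popl(\widehat h) - \popl(h^\star)
  \;=\;
  \bigl[\popl(\widehat h) - \widehat{\popl}(\widehat h)\bigr]
  + \bigl[\widehat{\popl}(\widehat h) - \widehat{\popl}(h^\star)\bigr]
  + \bigl[\widehat{\popl}(h^\star) - \popl(h^\star)\bigr].
\]
The middle bracket is non-positive because $\widehat h$ is, by construction, a minimizer of $\widehat{\popl}$ over $\hyps$, so $\widehat{\popl}(\widehat h) \leq \widehat{\popl}(h^\star)$. For the first and third brackets, I simply instantiate the uniform deviation bound once at $h = \widehat h$ and once at $h = h^\star$, both of which are legal on the event $E$ since the bound of Theorem~\ref{thm:ucb} is uniform in $h \in \hyps$. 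Each contributes at most $\Gamma(k,n,\delta)$, so summing yields
\[
  \popl(\widehat h) - \popl(h^\star) \;\leq\; 2\,\Gamma(k,n,\delta),
\]
which, rearranged, is exactly the claimed oracle inequality.

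There is really no substantive obstacle here: the only subtle point worth flagging in the write-up is that the two applications of Theorem~\ref{thm:ucb} must be made on the \emph{same} sample-level event, which is automatic because that theorem is itself a uniform-over-$\hyps$ statement rather than a pointwise-in-$h$ concentration bound. Consequently, no union bound over hypotheses or doubling of $\delta$ is needed, and the $\delta$-dependence is inherited directly from Theorem~\ref{thm:ucb} without modification. The rest of the argument is mechanical.
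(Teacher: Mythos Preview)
Your proposal is correct and is exactly the standard ERM oracle-inequality argument the paper intends: the corollary is stated in the paper without proof, as an immediate consequence of the uniform bound in Theorem~\ref{thm:ucb} via the add--subtract decomposition you wrote. Your remark that no extra union bound is needed because Theorem~\ref{thm:ucb} is already uniform over $\hyps$ is the only point worth noting, and you handled it correctly.
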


\paragraph{Comparison to event level learning.} We can now compare the bound from Theorem~\ref{thm:ucb} to standard learning bounds for instance level learning like that of \cite{mohri}. Assuming we had access to a labeled i.i.d. sample $(x_{ij}, y_{ij})$ of size $kn$, Theorem 3.3 in \cite{mohri} ensures that with probability at least $1 - \delta$ the following bounds holds for all $h \in \hyps$:
\begin{equation}
    \label{eq:event_level}
  \biggl|\popl(h) - \frac{1}{nk}\sum_{i,j}\ell (h(x_{ij}, y_{ij}))\biggr| \leq 2 \rad_{nk}(\mathcal{H}_\ell)
+ B \sqrt{\frac{\log(2/\delta)}{2nk}}~,
\end{equation}
where $\hyps_\ell = \{(x, y) \to \ell(h(x), y) \colon h \in \hyps\}$. Note that under the weak assumption that the Rademacher complexities $\rad_{kn}(\hyps_\ell^{(r)})$, $r \in\{ 0, 1\}$, are of the same order as $\rad_{kn}(\hyps_\ell)$, the main difference between the bound in Theorem \ref{thm:ucb} and \eqref{eq:event_level} is simply an extra factor $C_n^k\in \tilde{O}(\sqrt{k})$ in the complexity term and a factor $\sqrt{k}$ multiplying the confidence term.
That is, we achieve similar guarantees to event level learning by increasing the sample size by a factor of roughly $k$.

\section{SGD with Label Proportions}\label{s:sgd}
We now focus on understanding the effect of label proportions on another very popular learning algorithms, stochastic gradient descent (SGD).

Corollary \ref{cor:debiasedLoss} and Lemma \ref{l:derand} deliver unbiased estimates of the gradient which can be naturally plugged into any SGD algorithm~(e.g., \cite{Shamir013}), and one would hope for an upper bound on the excess risk if the learning task at hand leads to a convex optimization problem. The difficulty is that, even if each gradient in a given bag is individually unbiased, the gradients are correlated since they depend on the label proportion computed on the bag. A simple way around it is to pick a single item uniformly at random from the bag to update the model parameters. This is a slight departure from what we considered for ERM, but it both makes our SGD analysis easier and does not affect asymptotic performance.

This approach is presented in Algorithm \ref{alg:pickone}. The operator $\Pi_{\mathcal{W}}$ projects the parameters back to the convex domain $\mathcal{W}$, if the gradient update pushed them out of it.

\begin{algorithm}[ht!]
  \caption{SGD Using Pick-One from Each Bag \label{alg:pickone}}
  \begin{algorithmic}[1]
    \STATE {\bf Input }{$\mathcal{S} = \{ (\bag_i, \alpha_i),\,i \in [n]\}$,  $\{\eta_{i} : i\in [n]\}$, $\bw_0$}
    \STATE $\bw \leftarrow \bw_0$
    \FOR{$t \in\{1,\ldots , n\}$}
    \STATE Pick $j$ from $[k]$ uniformly at random
    \STATE Update the model parameter $\bw_t$ as
    $$\bw_{t+1} \leftarrow \Pi_{\mathcal{W}} \big( \bw_{t} -\eta_i \tg(x_{tj}, \ty_t) \big)$$
    with $\tg(x_{tj}, \ty_t)$ defined in (\ref{eq:tlsimple})
    \hfill {\small{\texttt{//surrogate label}}}
    \STATE {\bf OR} Update the model parameter $\bw_t$ as
    $$\bw_{t+1} \leftarrow \Pi_{\mathcal{W}} \big( \bw_{t} -\eta_i \tg(x_{tj}, \alpha_t) \big)$$
    with $\tg(x_{tj}, \alpha_t)$ defined in (\ref{eq:simple_tlalpha}) \hfill {\small{\texttt{//soft label}}}
    \ENDFOR
    \STATE Update the model parameter with 
    
    \STATE{\textbf{Return} $\bw_{n+1}$}
  \end{algorithmic}
\end{algorithm}

We give two versions of the update rule, one based on a single surrogate label as in (\ref{eq:tlsimple}), the other based on a single (derandomized) soft label as in (\ref{eq:simple_tlalpha}). The error depends on the squared norm of these gradients, which is of order $O(k^2)$ using surrogate labels and $O(k)$ using soft labels, as presented in the next theorem.
\begin{restatable}{theorem}{thmConvexPickone}\label{thm:convex_pickone}
Suppose that $F(\bw) = \E[ \ell_{\bw} (x, y) ]$ is convex,  $\E[\| g_{\bw_t} (x, y) \|^2 ] \le G^2$ for all $t\in [n]$, 
and $\sup_{\bw, \bw'\in \mathcal{W}} \|\bw -\bw' \|\le D$. Consider Algorithm \ref{alg:pickone} using surrogate labels, run with $\eta_t = 1/(k \cdot \sqrt{t})$.
Then for any $n>1$, we have
\[
\E[ F (\bw_n ) - F(\bw^*)] \le k\cdot \left(D^2 + 5G^2 \right) \frac{2+\log(n)}{\sqrt{n}} \,.
\]
Consider now Algorithm \ref{alg:pickone} using soft labels, run with $\eta_t = 1/ \sqrt{k \cdot t}$.
Then for any $n>1$ we have
\[
\E[ F (\bw_n ) - F(\bw^*)] \le \sqrt{k}\cdot \left(D^2 + 5G^2 \right) \frac{2+\log(n)}{\sqrt{n}} \,.
\]
\end{restatable}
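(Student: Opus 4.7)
My plan is to reduce both statements to the standard convex-SGD analysis with a bounded-second-moment stochastic gradient. Two ingredients are needed: (i) unbiasedness of the per-step update direction with respect to $\nabla F(\bw_t)$, and (ii) a bound on its expected squared norm. Unbiasedness in both variants is immediate from the earlier results: conditional on $\bw_t$, the uniform choice of $j\in[k]$ combined with Corollary~\ref{cor:debiasedLoss} (surrogate variant) or Lemma~\ref{l:derand} (soft-label variant) gives $\E[\tg(x_{tj},\cdot)\mid \bw_t] = \E_{(x,y)\sim\cD}[\nabla_{\bw}\ell(\bw_t,x,y)] = \nabla F(\bw_t)$, so the update direction is unbiased for the true gradient of $F$.

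The second-moment bounds are where the two variants diverge by exactly the factor of $k$ that drives the two rates. In the surrogate case, equation~(\ref{eq:tlsimple}) writes $\tg(x_{tj},\ty_t)$ as a linear combination of $g(x_{tj},0)$ and $g(x_{tj},1)$ with coefficients bounded in absolute value by $k+1$, so combining this with $\E[\|g_{\bw_t}(x,y)\|^2]\le G^2$ yields $\E[\|\tg(x_{tj},\ty_t)\|^2] \le c\,k^2 G^2$ for a universal constant $c$. In the soft-label case I apply Theorem~\ref{t:gradient_squared} directly to obtain $\E[\|\tg(x_{tj},\alpha_t)\|^2] \le C_1 + k\,p(1-p)\,\E[\|g_0-g_1\|^2] \le c' k G^2$. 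Denote the two resulting gradient second-moment bounds by $\widetilde G^2$; the former is $O(k^2 G^2)$ and the latter is $O(k G^2)$.

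Then I invoke the standard projected-SGD one-step inequality on $\cW$: expanding $\|\bw_{t+1}-\bw^*\|^2$, taking conditional expectation, using unbiasedness to replace $\langle \tg_t, \bw_t-\bw^*\rangle$ by $\langle \nabla F(\bw_t),\bw_t-\bw^*\rangle$, and applying convexity of $F$ produces $\E[F(\bw_t)-F(\bw^*)] \le \tfrac{\E\|\bw_t-\bw^*\|^2 - \E\|\bw_{t+1}-\bw^*\|^2}{2\eta_t} + \tfrac{\eta_t}{2}\widetilde G^2$. Plugging the step-size schedule $\eta_t = c/\sqrt{t}$ into the last-iterate convex-SGD bound in the spirit of Shamir \& Zhang (the \texttt{Shamir013} reference already cited) delivers an excess-risk bound of the form $\E[F(\bw_n)-F(\bw^*)] \le O\!\left(\tfrac{(D^2 + c^2 \widetilde G^2)(1+\log n)}{c\sqrt{n}}\right)$. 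Taking $c = 1/k$ in the surrogate case and $c = 1/\sqrt{k}$ in the soft-label case balances the two terms and recovers exactly the announced bounds $k\cdot(D^2+5G^2)(2+\log n)/\sqrt{n}$ and $\sqrt{k}\cdot(D^2+5G^2)(2+\log n)/\sqrt{n}$.

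The main obstacle is that the theorem states a guarantee for the last iterate $\bw_n$ rather than a Polyak--Ruppert average, which forces reliance on the non-trivial last-iterate analysis for convex SGD and is precisely what introduces the $\log n$ factor together with the specific $1/\sqrt{t}$ schedule; the plain telescoping/averaging argument alone would give a bound on $\bar{\bw}_n$ but not on $\bw_n$. A secondary subtlety worth checking is that although the bag-level statistic $\ty_t$ or $\alpha_t$ is shared across all $j\in[k]$ and would otherwise correlate the per-coordinate gradients, the uniform pick of a single $j$ per step fully decouples this correlation within each SGD step, so no variance-reduction argument beyond Theorem~\ref{t:gradient_squared} is needed.
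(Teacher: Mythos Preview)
Your proposal is correct and follows essentially the same route as the paper: unbiasedness from Corollary~\ref{cor:debiasedLoss}/Lemma~\ref{l:derand}, a second-moment bound of order $k^2G^2$ (surrogate) or $kG^2$ (soft, via Theorem~\ref{t:gradient_squared}), the standard projected-SGD one-step inequality, and then the Shamir--Zhang last-iterate analysis with the stated step-sizes. The only cosmetic difference is that the paper obtains the $5k^2G^2$ bound by explicitly expanding the square of $\tg_{\bw}(x,\ty)$ and applying Cauchy--Schwarz term by term, whereas you argue via the coefficient bound $|k(\ty-p)+p|\le k+1$ in~(\ref{eq:tlsimple}); both yield the same order and constant up to negligible differences.
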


Note that the error is higher by a multiplicative factor of $\sqrt{k}$ when using surrogate labels in the gradient update compared to soft labels.  Recall that the error of SGD with $kn$ indivually labeled examples decreases like $O\left(\frac{1}{\sqrt{kn}}\right)$. Compared with the above bound, we see that, similar to the ERM scenario,  the regret bound increases by a factor of $k$.

\section{Generalization to Multi-Class}\label{s:multi}

Thus far we have restricted our analysis to binary classification problems. This was mostly done for ease of exposition. We now show that \easyllp can easily generalize to the multi-class scenario. This is in contrast to other methods like that of \cite{dulac2019deep} whose generalization to multi-class requires solving an optimization problem for every gradient step. For the rest of this section we let $C$ denote the number of classes and $\cY = \{1, \ldots, C\}$ denote the label space. The result below is proven in the appendix and is a generalization of its binary counterpart.

\begin{restatable}{theorem}{thmMulticlass}
\label{thm:multiclass}
Let $k> 0$ and  $(x_1, y_1),\ldots, (x_,y_k)$ be a sample from distribution $\cD$. For each class $c$, let $\alpha_c = \frac{1}{k}\sum_{i=1}^k \ind_{y_i = c}$ denote the fraction samples with label $c$. Let $\bag = \{x_1, \ldots, x_k\}$ and define $\boldsymbol{\alpha} = (\alpha_1, \ldots, \alpha_k)$. Finally, let $p_c = \PP(y = c)$.  Let $g\colon \cX \times \cY \to \Rset^d$ be any (measurable) function over features and labels. Define the soft label corrected function $\tg$ as:
\begin{equation*}
    \tg(x, \boldsymbol{\alpha}) = \sum_{c=1}^C \big(k\alpha_c - (k-1)p_c\big) g(x, c).
\end{equation*}
For any $x_i \in \bag$, the soft label corrected function satisfies:
\begin{equation*}
    \E[\tg(x_i, \boldsymbol{\alpha})] = \E_{x, y \sim \cD}[g(x, y)]~.
\end{equation*}
\end{restatable}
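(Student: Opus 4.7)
The plan is to mirror the binary argument (Proposition~\ref{prop:bias} / Corollary~\ref{cor:debiasedLoss}) by exploiting linearity of expectation and isolating the contribution of the $i$-th sample inside $\alpha_c$. First I would fix an index $i \in [k]$ and condition on $(x_i, y_i)$. The key decomposition is
\[
k \alpha_c \;=\; \ind_{y_i = c} \;+\; \sum_{j \neq i} \ind_{y_j = c}\,,
\]
which separates the only term of $\alpha_c$ that is correlated with $(x_i, y_i)$ from the $k-1$ independent terms. Since the $(x_j, y_j)$ for $j \neq i$ are i.i.d.\ and independent of $(x_i, y_i)$, the conditional expectation of the second sum given $(x_i, y_i)$ is just $(k-1) p_c$. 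Hence
\[
\E\bigl[\, k \alpha_c - (k-1) p_c \,\bigm|\, x_i, y_i \bigr] \;=\; \ind_{y_i = c}\,.
\]

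Plugging this into the definition of $\tg$ and using linearity of expectation, I would get
\begin{align*}
\E[\tg(x_i, \boldsymbol{\alpha})]
&= \sum_{c=1}^{C} \E\!\left[\bigl(k\alpha_c - (k-1)p_c\bigr)\, g(x_i, c)\right] \\
&= \sum_{c=1}^{C} \E\!\left[ \E\bigl[ k\alpha_c - (k-1)p_c \,\bigm|\, x_i, y_i\bigr]\, g(x_i, c) \right] \\
&= \sum_{c=1}^{C} \E\!\left[ \ind_{y_i = c}\, g(x_i, c) \right]
\;=\; \E[g(x_i, y_i)]\,,
\end{align*}
where the last step collapses the sum since exactly one indicator is nonzero. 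Since $(x_i, y_i) \sim \cD$, this is $\E_{(x,y)\sim\cD}[g(x,y)]$, as required.

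There is no real obstacle here: the only subtlety is to make sure the tower rule is applied to the right sigma-algebra (conditioning on the single pair $(x_i, y_i)$ rather than on $\boldsymbol{\alpha}$), so that the remaining $k-1$ labels retain their unconditional distribution. As a sanity check, summing the coefficients gives $\sum_c (k\alpha_c - (k-1)p_c) = k \cdot 1 - (k-1)\cdot 1 = 1$, and specializing to $C=2$ with $p_1 = p$, $\alpha_1 = \alpha$ recovers exactly equation~(\ref{eq:simple_tlalpha}), confirming the formula before carrying out the calculation.
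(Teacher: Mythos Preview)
Your proof is correct and follows essentially the same approach as the paper: both isolate the $i$-th summand in $k\alpha_c$ via the decomposition $k\alpha_c = \ind_{y_i=c} + \sum_{j\neq i}\ind_{y_j=c}$ and use independence of the remaining $k-1$ labels from $(x_i,y_i)$ to replace their contribution by $(k-1)p_c$. The only cosmetic difference is that you phrase this via the tower rule conditioning on $(x_i,y_i)$, whereas the paper introduces the notation $\alpha_c^{(i)} = \alpha_c - \tfrac{1}{k}\ind_{y_i=c}$ and invokes independence directly; the computations are otherwise identical.
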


\section{Experiments}\label{sec:experiments}
We now present an empirical evaluation of our algorithm, comparing its performance against other approaches to LLP.

\paragraph{Baselines.} The baselines we compare to are algorithms for learning event level classifiers. We do not compare against algorithms that try to match the bag label proportions as they are aimed at different tasks. In fact, the empirical evaluation achieved by those algorithms requires very specific data generation processes, making those algorithms unlikely to work on i.i.d. data (see, e.g., the discussion in \cite{ScottZ20}).
\begin{itemize}
    \item {\bf Event level. } As an obvious baseline, the learner is trained with access to event level labeled examples $(x_{ij}, y_{ij})$. This baseline delivers an upper bound on the performance of any label proportion algorithm. 
    \item {\bf Label generation. } This algorithm is introduced in  \cite{dulac2019deep}. Their algorithm generates artificial labels at every training round to match the observed label proportions. Labels are generated to match the current predictions as much as possible. 
    \item{\bf MeanMap. } The  algorithm of \citet{QuadriantoSCL08}. The algorithm is specialized for logistic regression with linear models and uses linear algebra manipulations on the label proportions to obtain an estimate of the mean operator $\frac{1}{nk}\sum_{i,j} x_{ij}y_{ij}$. We only use this algorithm when learning linear classifiers as it does not generalize to arbitrary hypothesis sets.
\end{itemize}

\paragraph{Datasets.} We tested on the following binary classification datasets:
\begin{itemize}
    \item \textit{MNIST.} We use the MNIST dataset with 11 different binarizations of the original multi-class labels: even-vs-odd, and a one-vs-all binarization for each of the 10 digits.
    \item \textit{Cifar10.} We use the Cifar10 dataset with 11 different binarizations of the original multi-class labels: animal-vs-machine, and a one-vs-all binarization for each of the 10 classes.
\end{itemize}

\paragraph{Models.} We evaluate the methods under two different model classes: a linear model for all tasks, a deep convolutional network for MNIST and CIFAR 10 and a deep neural network for Kaggle CTR dataset. The specifications for the networks will be made available with our code at publication time. All models are trained using the binary cross-entropy loss.

\paragraph{Methodology}
For each LLP method, model class, and dataset, we use the LLP method to train a model using bags of size $k = 2^r$ for $r = 1,2, \ldots, 10$. Bags are generated by randomly grouping examples together. Each algorithm is trained using the \textsc{ADAM} optimizer for 40 rounds. We replicate each learning task 30 times. Unlike the experimental setup of previous work \cite{ScottZ20}, for experiment replica our bags are generated only once and label proportions are fixed before training starts. By fixing the bags throughout the whole learning period, we observe a slight regression on the test accuracy for all algorithms as we increase the number of epochs. For this reason we report, for each algorithm, the best test accuracy across all training rounds. It is worth highlighting that this regression effect has already been observed in past investigations, e.g., \cite{ScottZ20}, and is likely the effect of overfitting to the soft labels. As we discuss later on, we expect this overfitting effect to be diminished by larger datasets and ensuring each example is used only once in the training process.

In Figure~\ref{fig:bagsize_vs_perf} we plot
test accuracy of the model vs. bag size.
Each curve is averaged over the 30 independent runs, each with a different random partition of the data into bags of size $k$ and random model weight initialization.

\begin{figure}[ht!]
    \includegraphics[width=0.22\textwidth]{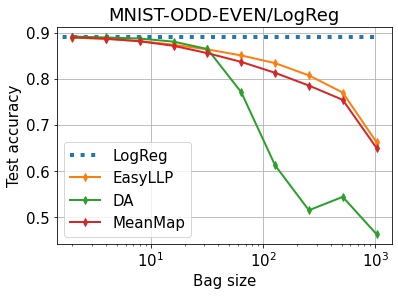}\quad
    \includegraphics[width=0.22\textwidth]{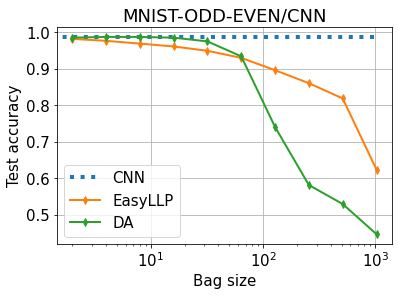}\\
    \includegraphics[width=0.22\textwidth]{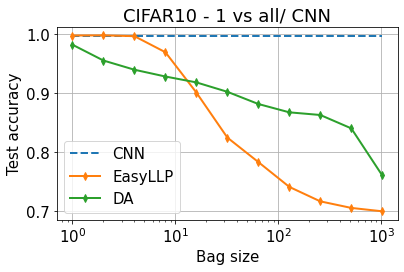}\quad
    \includegraphics[width=0.22\textwidth]{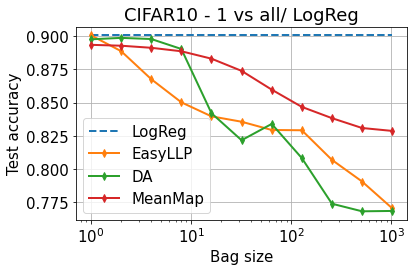}\\
    \includegraphics[width=0.22\textwidth]{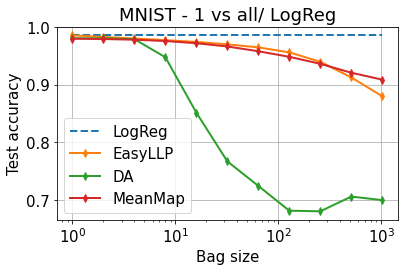}\quad
    \includegraphics[width=0.22\textwidth]{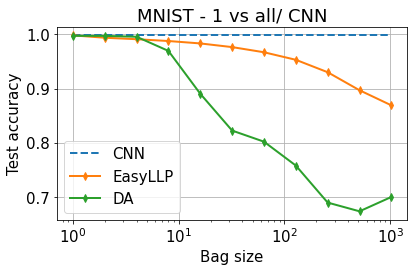}
    \caption{Performance of various methods trained with label proportion. Here, the event level performance is given by either a logistic regression algorithm (``LogReg") or a CNN. ``DA" denotes the method from \cite{dulac2019deep}. }
    \label{fig:bagsize_vs_perf}    \end{figure}

Finally, since there are 10 different one-vs-all tasks for MNIST and Cifar10, we report the average test accuracy over all 10 tasks for various bag sizes.
That is, each point in the one-vs-all plot corresponds to the average of 300 independent runs of LLP, with 30 of those runs belonging to each of the 10 one-vs-all tasks.

Some of the takeaways we can get from the results is that for linear models our algorithm performs very similar to that of MeanMap. We believe there may be a deeper connection between our algorithm and MeanMap, i.e., our algorithm is a strict generalization of MeanMap, but this connection is left as an open research question. We also see that the label generation algorithm consistently underperforms MeanMap and our algorithm. The main advantage of our algorithm however can be observed when we use neural networks as our base model class. The flexibility of choosing an arbitrary model class is what makes our proposed algorithm state of the art. On the other hand, the comparison to DA is a bit mixed (other plots are in the Appendix \ref{app:cifar10}), and we believe this may deserve further investigation.

\paragraph{Comparison of unbiased loss and training loss.}

We now demonstrate that our unbiased loss is indeed (not only theoretically but in practice) a good estimator for the empirical instance level loss.
We train a CNN model on the MNIST even-vs-odd task with \easyllp{} for 200 epochs with bags of size 32 and record the empirical instance level training loss and the estimated training loss from bags.
\Cref{fig:tracking} compares the trajectories of the loss and loss estimates as the model is trained for more epochs.
At the beginning of each epoch, we shuffle the training data and create consecutive bags of examples.
The curves are averaged over 30 independent runs and the shaded bands show one standard deviation.
We see that the loss estimates closely track the training loss, albeit with larger variance.

\begin{figure}[h!]
    \centering
    \includegraphics[width=0.55\columnwidth]{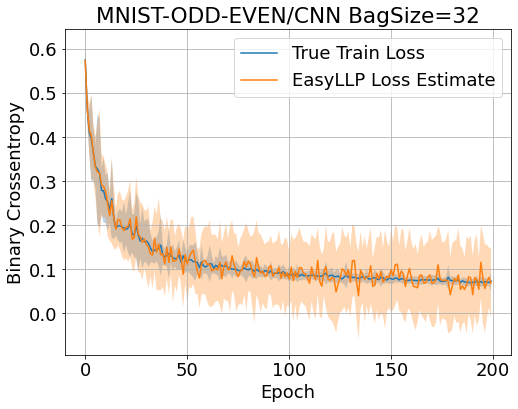}
    \caption{Depicts the true training loss (measured using instance-level access to the training data) compared to the loss estimates computed from bags.
    We average over 30 runs and the error bands show $\pm$ one standard deviation.}
    \label{fig:tracking}
\end{figure}

\section{Conclusions}\label{s:concl}
We have introduced \easyllp, a novel and flexible approach to LLP for  classification. The method is widely applicable and its theoretical underpinning  assumes the bags are non-overlapping i.i.d. samples drawns from the same (unknown) distribution.
For the sake of illustration, we started off by considering the generation of random surrogate labels and an associated debiased loss, and then argued that a simple derandomization helps decrease the variance of the resulting estimators. We illustrated applications to ERM and SGD, and proved that LLP performance against event level loss is only a factor $\sqrt{k}$ worse than full supervision. Finally, we conducted an experimental investigation against representative baselines in binary classification, and showed that, on the tested datasets, our method is either on par with or superior to its competitors.

As future activity, we would like to have a better understanding of the connection between our method and other methods proposed  in the literature. For instance, when applied to linear models we suspect that \easyllp gets tightly related to MeanMap. Also, a more thorough experimental comparison is ongoing that encompasses more baselines and more benchmark datasets.




\newpage
\appendix
\onecolumn

\section{Proofs}
Throughout this appendix, we denote by $\ind_{A}$ the indicator function of the predicate $A$ at argument. 

\subsection{Proof of Proposition \ref{prop:bias}}
\propBias*
\begin{proof}
Let $(x_1,y_1),\ldots,(x_k,y_k)$ be a sample drawn i.i.d. from $\cD$ and let $\alpha = \frac{1}{k} \sum_{i=1}^k y_i$ be the label proportion. 
Fix $j$ and let $\ty_j \sim \bernoulli(\alpha)$ denote the surrogate label for example $x_j$.
The distribution of $\ty_j$ is equivalent to first sampling an index $I$ uniformly at random from $[k]$ and then setting $\ty_j = y_I$.
With this, we have
\begin{align*}
    \E[g(x_j, \ty_j)]
    = \sum_{i=1}^k \E[g(x_j, \ty_j) \mid I = i] \cdot \Pr(I = i)
    = \frac{1}{k}\sum_{i=1}^k \E[g(x_j, y_i)].
\end{align*}
When $i \neq j$, we have that $x_i$ and $y_j$ are independent, which gives
\[
\E[g(x_j, y_i)] = (1-p)\E[g(x_j, 0)] + p \E[g(x_j, 1)],
\]
which no longer depends on the index $i$.
Therefore, we have
\[
\E[g(x_j, \ty_j)]
= \frac{1}{k} \E[g(x_j, y_j)]
+ \frac{(k-1)(1-p)}{k} \E[g(x_j, 0)]
+ \frac{(k-1)p}{k} \E[g(x_j, 1)],
\]
as required.
\end{proof}

\subsection{Proof of Theorem \ref{t:gradient_squared}}

In this section we prove \Cref{t:gradient_squared}, which we recall now:
\thmGradientSquared*

For simplicity of notation let us define the following quantities:
\begin{equation*}
    \tg_i = \tg(x_i, \alpha) \qquad g_{i0} = g(x_i,0), \qquad g_{i1} = g(x_i,1)~.
\end{equation*}

Let also $A = k(\alpha - p) + p$. With this notation we have that 
\begin{equation*}
    \tg_i = A g_{i1} + (1 - A) g_{i0}. 
\end{equation*}
The proof of Theorem \ref{t:gradient_squared} will be a consequence of the following lemmas.

\begin{lemma}
\label{lemma:bound_on_single}
Using the above notation, the following inequality holds
\begin{equation*}
    \E\left[\left\|\frac{1}{k} \sum_{i=1}^k\tg_i\right\|^2\right] \leq \E[\|\tg_1\|^2]~.
\end{equation*}
\end{lemma}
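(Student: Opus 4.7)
The plan is to deduce this from two simple observations: the convexity of the squared norm and the exchangeability of the sequence $(\tg_1, \dots, \tg_k)$.

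First I would note that since $(x_1, y_1), \dots, (x_k, y_k)$ are i.i.d.\ and $\alpha = \frac{1}{k}\sum_i y_i$ is a symmetric function of the $y_i$'s, the sequence $(x_1, \alpha), \dots, (x_k, \alpha)$ is exchangeable. Because $\tg_i = A\, g_{i1} + (1-A)\, g_{i0}$ depends only on $x_i$ and the shared quantity $A = k(\alpha - p) + p$, the random vectors $\tg_1, \dots, \tg_k$ are exchangeable too, and in particular identically distributed. Hence $\E[\|\tg_i\|^2] = \E[\|\tg_1\|^2]$ for every $i$.

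Next I would apply Jensen's inequality to the convex function $u \mapsto \|u\|^2$ on $\Rset^d$: for any vectors $u_1, \dots, u_k$,
\[
\Bigl\| \frac{1}{k}\sum_{i=1}^k u_i \Bigr\|^2 \le \frac{1}{k}\sum_{i=1}^k \|u_i\|^2.
\]
Applying this pointwise to $u_i = \tg_i$ and taking expectations gives
\[
\E\!\left[\Bigl\| \frac{1}{k}\sum_{i=1}^k \tg_i \Bigr\|^2\right] \le \frac{1}{k}\sum_{i=1}^k \E[\|\tg_i\|^2] = \E[\|\tg_1\|^2],
\]
which is exactly the claimed bound.

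There is essentially no obstacle here; the only subtle point worth spelling out carefully is the exchangeability argument justifying that $\tg_i$ has the same distribution as $\tg_1$. One might be tempted to argue conditionally on $\alpha$ (under which the $x_i$'s are no longer independent because of the constraint on the $y_i$'s), but this is unnecessary: exchangeability of the underlying i.i.d.\ sample already makes the unconditional distributions of the $\tg_i$ coincide, so Jensen applied unconditionally gives the result directly without any second-moment computation.
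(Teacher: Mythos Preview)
Your proof is correct, and it takes a slightly different route from the paper. The paper expands the squared norm as
\[
\E\!\left[\Bigl\|\tfrac{1}{k}\sum_i \tg_i\Bigr\|^2\right]
= \frac{1}{k}\,\E[\|\tg_1\|^2] + \frac{k-1}{k}\,\E[\langle \tg_1,\tg_2\rangle],
\]
and then bounds the cross term via $0 \le \E[\|\tg_1-\tg_2\|^2] = 2\E[\|\tg_1\|^2]-2\E[\langle\tg_1,\tg_2\rangle]$. You instead invoke Jensen for the convex map $u\mapsto\|u\|^2$ and use exchangeability only to equate the $\E[\|\tg_i\|^2]$. Your argument is shorter and avoids any cross-term bookkeeping. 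The paper's decomposition, however, is not wasted effort: the identity it records (their equation labeled \texttt{eq:norm\_decomp}) is reused verbatim in the subsequent lemma that sharpens the bound on $\E[\|\tfrac{1}{k}\sum_i\tg_i\|^2]$ in terms of $\|\E[g_0-g_1]\|^2$. So if one only needs the present lemma, your Jensen argument is the cleaner choice; if one is heading toward the finer bound in Theorem~\ref{t:gradient_squared}, the explicit diagonal/off-diagonal split has to be written down anyway.
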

\begin{proof}
By simple linear algebra and the fact that the $\tilde g_i$ are equally distributed we have
\begin{align}
\E\left[\Bigl|\Bigl|\frac{1}{k} \sum_{i=1}^k\tilde g_i\Bigl|\Bigl|^2\right] = \frac{1}{k} \E[||\tilde g_1||^2] + \frac{k-1}{k}\,\E[\langle\tilde g_1,\tilde g_2\rangle]~. \label{eq:norm_decomp}    
\end{align}
Moreover,
\[
0 \leq \E[||\tilde g_1- \tilde g_2||^2] = 2\E[||\tilde g_1||^2] - 2\E[\langle\tilde g_1, \tilde g_2\rangle]
\]
implies
\[
\E[\langle\tilde g_1, \tilde g_2\rangle] \leq \E[||\tilde g_1||^2]~.
\]
Replacing this inequality in \eqref{eq:norm_decomp} yields
\[
\E\left[\Bigl|\Bigl|\frac{1}{k} \sum_{i=1}^k\tilde g_i\Bigl|\Bigl|^2\right] \leq \E[||\tilde g_1||^2]~,
\]
which is the claimed result.
\end{proof}
\begin{lemma}
\label{lemma:single_var}
Let $M = \sup_{x, y} \|g(x,y)\|$. For any index $i$ the following inequality holds:
\begin{equation}
    \E[\|\tg_i\|^2]  \leq  9M^2 + (k-1)p(1-p)\E_{x\sim D}[\|g(x, 1) - g(x,0)\|^2] 
\end{equation}
\end{lemma}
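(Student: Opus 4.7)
The plan is to decompose $\tg_i$ additively as $\tg_i = g_{i0} + A(g_{i1} - g_{i0})$ where $A = k(\alpha - p) + p = \sum_{j=1}^k y_j - (k-1)p$, and expand the squared norm as
\[
\|\tg_i\|^2 = \|g_{i0}\|^2 + 2A\langle g_{i0}, g_{i1} - g_{i0}\rangle + A^2 \|g_{i1} - g_{i0}\|^2,
\]
so that $\E[\|\tg_i\|^2]$ splits into three manageable pieces. The goal is then to bound each piece by the appropriate multiple of $M^2$ plus a variance-like term that carries the factor $(k-1)p(1-p)$.

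The workhorse is to condition on $x_i$ and exploit that $y_i$ is linked to $x_i$ via $\eta(x_i) = \PP(y=1 \mid x=x_i)$ while the remaining $y_j$ (for $j \neq i$) are independent of $x_i$ and marginally $\bernoulli(p)$. A short calculation then gives $\E[A \mid x_i] = \eta(x_i)$, and, via $\var(A \mid x_i) = \eta(x_i)(1-\eta(x_i)) + (k-1)p(1-p)$, the clean identity
\[
\E[A^2 \mid x_i] = \eta(x_i) + (k-1)p(1-p).
\]
This is the main technical step: correctly accounting for the asymmetric role of $y_i$ versus the other $y_j$'s in the moments of $A$.

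Once these conditional moments are in hand, each of the three terms is bounded by a simple uniform argument. The first term gives $\E[\|g_{i0}\|^2] \leq M^2$. For the cross term, the tower property together with Cauchy--Schwarz yields
\[
\bigl|2\,\E[A\langle g_{i0}, g_{i1}-g_{i0}\rangle]\bigr| = \bigl|2\,\E[\eta(x_i)\langle g_{i0}, g_{i1}-g_{i0}\rangle]\bigr| \leq 4M^2,
\]
using $\eta(x_i) \leq 1$ and $\|g_{i1}-g_{i0}\| \leq 2M$. The quadratic term becomes
\[
\E\bigl[(\eta(x_i) + (k-1)p(1-p))\|g_{i1}-g_{i0}\|^2\bigr] \leq 4M^2 + (k-1)p(1-p)\,\E[\|g(x,1)-g(x,0)\|^2],
\]
where the first summand again uses $\eta(x_i) \leq 1$ and $\|g_{i1}-g_{i0}\|^2 \leq 4M^2$. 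Summing the three bounds $M^2 + 4M^2 + 4M^2$ delivers the $9M^2$ constant exactly.

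The only delicate point in the calculation is the bookkeeping for $\E[A^2 \mid x_i]$: one must keep $y_i$ and the $y_j$ for $j \neq i$ in separate buckets, because only the latter remain independent of $x_i$ under conditioning. Apart from this, the proof is a direct expansion followed by crude uniform bounds, so I do not anticipate a substantial obstacle.
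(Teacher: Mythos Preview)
Your proof is correct. It differs from the paper's in the choice of decomposition, and the difference is worth a brief comment.

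The paper splits $A = y_i + B$ with $B = \sum_{j\neq i}(y_j - p)$, so that $B$ is a centered binomial \emph{independent of $(x_i,y_i)$}. Writing $Z_i = y_i(g_{i1}-g_{i0}) + g_{i0}$ (a function of $(x_i,y_i)$ alone), one gets $\tg_i = Z_i + B(g_{i1}-g_{i0})$, and then
\[
\E[\|\tg_i\|^2] = \E[\|Z_i\|^2] + 2\,\E[B]\,\E[\langle Z_i, g_{i1}-g_{i0}\rangle] + \E[B^2]\,\E[\|g_{i1}-g_{i0}\|^2],
\]
where the cross term vanishes because $\E[B]=0$. The bound $\|Z_i\|\le 3M$ then yields $9M^2$ for the first piece. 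By contrast, you keep $A$ intact and condition on $x_i$, computing $\E[A\mid x_i]=\eta(x_i)$ and $\E[A^2\mid x_i]=\eta(x_i)+(k-1)p(1-p)$; your cross term does not vanish but is controlled via $\eta(x_i)\le 1$ and Cauchy--Schwarz.

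What each buys: the paper's decomposition is slightly cleaner because the cross term disappears by independence, and in fact $Z_i\in\{g_{i0},g_{i1}\}$ so $\|Z_i\|\le M$ always---the paper's $3M$ is a loose triangle-inequality bound, and the constant could be improved to $M^2$. Your route avoids introducing $Z_i$ and works directly with the regression function $\eta(\cdot)$, which is perhaps more probabilistically transparent but forces you to bound three terms separately and lands exactly on $9M^2$. Either way the lemma is established.
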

\begin{proof}
Fix an index $i$ and rewrite $A$ as 
\begin{align*}
    A &= \sum_{j=1}^k y_j - (k-1)p
    = y_i + \sum_{j\neq i} y_j - (k-1)p
    = y_i + B
\end{align*}
where $B$ is a centered binomial random variable of parameters $(k-1, p)$ independent of $x_i,y_i$. This entails, 
\begin{equation}
    \E[B] = 0  \quad \text{and} \quad
    \E[B^2 ] = (k-1) p(1-p)~. \label{eq:binom_expec1}
\end{equation}
Setting for brevity $g_{i1} = g(x_i,1)$, $g_{i0} = g(x_i,0)$,
we thus have
\begin{align*}
    \E[\|\tg_i\|^2]  &= 
    \E[\|(B + y_i)g_{i1} + (1 - B - y_i)g_{i0}\|^2] \\
    &= \E[\|y_i(g_{i1} - g_{i0}) + g_{i0} - B(g_{i1} - g_{i0})\|^2]\\
    &= \E[\|Z_i + B(g_{i1} - g_{i0})\|^2],
\end{align*}
where $Z_i \in \Rset^d$ is given by $Z_i = y_i(g_{i1} - g_{i0}) + g_{i0}$. Expanding the above expression we see that 
\begin{align*}
    \E[\|Z_i + B(g_{i1} - g_{i0})\|^2]
    &= \E[\|Z_i\|^2] + 2 \E[B\langle Z_i,(g_{i1} - g_{i0}\rangle] +  \E[B^2 \|g_{i1} - g_{i0}\|^2] \\
    &=\E[\|Z_i\|^2] + 2 \E[B]\E[\langle Z_i,(g_{i1} - g_{i0}\rangle] +  \E[B^2]\E[\|g_{i1} - g_{i0}\|^2],
\end{align*}
where we have used the fact that $Z_i, g_{i0}, g_{i1}$ are all functions of $x_i, y_i$ and $B$ is independent of of these variables. Using \eqref{eq:binom_expec1} and the fact that $\|Z_i\| \leq 3 M$ we obtain the result.
\end{proof}
\begin{lemma}
\label{lemma:cross_prod}
Let $M = \sup_{x,y}\|g(x,y)\|$. For any pair of indices $i,j$ the following inequality holds
\begin{equation*}
    \E[\inner{\tg_i, \tg_j}] \leq (k-2)p(1-p) \|\E[g_{i1} - g_{i0}]\|^2 + 36M
\end{equation*}
\end{lemma}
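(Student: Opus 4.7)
The plan is to decompose $A$ so that exactly one piece carries the Binomial‑style variance contribution that produces the main term, while the remaining pieces contribute only $O(M^2)$ after taking expectations.

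First I would write
\[
A = k(\alpha-p)+p = y_i + y_j - p + D,
\]
where $D = \sum_{\ell\neq i,j} y_\ell - (k-2)p$. Crucially, $D$ is a centered Binomial sum, independent of $(x_i,y_i,x_j,y_j)$, with $\E[D]=0$ and $\E[D^2]=(k-2)p(1-p)$. Setting $\Delta_i = g_{i1}-g_{i0}$ (and similarly $\Delta_j$), the identity $\tg_i = g_{i0} + A\,\Delta_i$ and its $j$-counterpart give
\[
\langle \tg_i,\tg_j\rangle
 = \langle g_{i0},g_{j0}\rangle
 + A\langle g_{i0},\Delta_j\rangle
 + A\langle \Delta_i,g_{j0}\rangle
 + A^2 \langle \Delta_i,\Delta_j\rangle.
\]

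Next I would bound the expectation term by term. For the first term, independence of $x_i$ and $x_j$ plus Cauchy--Schwarz yields $\E[\langle g_{i0},g_{j0}\rangle] = \|\E[g_{i0}]\|^2 \leq M^2$. For the two linear-in-$A$ cross terms, substituting $A = y_i+y_j-p+D$, splitting, and invoking (i) independence of $D$ with $\E[D]=0$ to kill the $D$ contribution, (ii) independence of $(x_i,y_i)$ from $x_j$ for the $y_i$ contribution, and (iii) independence of $(x_j,y_j)$ from $x_i$ for the $y_j$ contribution, turns each expectation into a sum of inner products of expectations of norm at most $M$ times vectors of norm at most $2M$; collecting, these contribute an absolute $O(M^2)$ constant.

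The quadratic term is where the main quantity is born. Writing $s = y_i+y_j-p$, we have $A^2 = s^2 + 2sD + D^2$; by independence of $D$, the cross term $2\E[sD\langle \Delta_i,\Delta_j\rangle]=0$ and
\[
\E[D^2\langle \Delta_i,\Delta_j\rangle]
 = (k-2)p(1-p)\,\E[\langle \Delta_i,\Delta_j\rangle]
 = (k-2)p(1-p)\,\|\E[\Delta_i]\|^2,
\]
using independence of $x_i,x_j$. The residual $\E[s^2\langle \Delta_i,\Delta_j\rangle]$ is bounded crudely via $|s|\leq 2+p\leq 3$ and $\|\Delta\|\leq 2M$, producing at most $36M^2$. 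Summing the four contributions gives the claimed bound (the constant $36$ absorbs the $O(M^2)$ from the linear terms and the $\langle g_{i0},g_{j0}\rangle$ term).

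The only place that requires care is the bookkeeping of the conditional independences: each of $y_i,y_j,D$ appears both inside $A$ and implicitly inside $\Delta_i$ or $g_{i0}$ (through $x_i$, $x_j$), so one has to be explicit that $D$ is independent of all of $(x_i,y_i,x_j,y_j)$ before pulling out $\E[D]$ or $\E[D^2]$, and that $x_i \perp\!\!\!\perp x_j$ when collapsing $\E[\langle \Delta_i,\Delta_j\rangle]$ into $\|\E[\Delta]\|^2$. Once that is written out cleanly the rest is just Cauchy--Schwarz and the uniform bound $\|g(x,y)\|\leq M$.
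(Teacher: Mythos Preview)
Your decomposition $A = s + D$ is exactly the paper's $A = R + B$ (with $R = y_i + y_j - p$ and $B$ the centered sum over the remaining $k-2$ labels), and the isolation of the main term via $\E[D]=0$, $\E[D^2]=(k-2)p(1-p)$, and independence of $D$ from $(x_i,y_i,x_j,y_j)$ is the same argument the paper uses.

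The one slip is the bookkeeping of the constant. Your crude bound on $\E[s^2\langle\Delta_i,\Delta_j\rangle]$ alone already saturates $36M^2$, so it cannot additionally ``absorb'' the $\langle g_{i0},g_{j0}\rangle$ term and the two linear-in-$A$ terms as you assert in the last sentence; with your piecewise bounds the total strictly exceeds $36M^2$. The paper sidesteps this by \emph{not} expanding the residual: once the $B$-terms drop out, what remains is precisely $\E\bigl[\langle R\Delta_i + g_{i0},\, R\Delta_j + g_{j0}\rangle\bigr]$, and a single Cauchy--Schwarz handles it (write $R\Delta_i + g_{i0} = R g_{i1} + (1-R)g_{i0}$ and use $|R|+|1-R|\leq 3$ to get norm at most $3M$). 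If you recombine your four residual pieces back into that single inner product before bounding, the constant lands where you want it; otherwise the argument is identical to the paper's.
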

\begin{proof}
Fix $i, j$. As in the previous lemma, let us rewrite $A$ as
\begin{align}
    A &= y_i + y_j  - p + \sum_{r \neq i,j} y_r - (k-2)p\\
    &= R + B
\end{align}
where $R = y_i + y_j - p$ and $B$ is a centered binomial random variable of parameters $(k-2, p)$. Moreover $B$ is independent of $(x_i, x_j, y_i,y_j)$. 
We proceed to calculate the desired expectation
\begin{align*}
    \E[\inner{\tg_i, \tg_j}]
    &= \E[\inner{A (g_{i1} - g_{i0}) + g_{i0},
    A(g_{j1} - g_{j0}) + g_{j0}}]\\
    &= \E[\inner{B (g_{i1} - g_{i0}) + R(g_{i1} - g_{i0}) +  g_{i0},
    B(g_{j1} - g_{j0}) + R((g_{j1} - g_{j0}) + g_{j0}}]~,
\end{align*}
where, again, $g_{i1} = g(x_i,1)$, and $g_{i0} = g(x_i,0)$.
Using a similar argument as in the previous lemma, it is not hard to see that the above expression simplifies to:
\begin{equation*}
    \E[\inner{\tg_i, \tg_j}] = 
    (k-2)p(1-p) \E[\inner{g_{i1} - g_{i0}, g_{j1}- g_{j0}}] + \E[\inner{R((g_{i1} - g_{i0}) + g_{i0},
    R((g_{j1} - g_{j0}) + g_{j0}}]~.
\end{equation*}
Using the fact that $g_{i1} - g_{i0}$ and $g_{j1} - g_{g0}$ are independent and identically distributed as well as Cauchy-Schwartz inequality for the second term we obtain the following bound 
$$
\E[\inner{\tg_i, \tg_j}] \leq  
    (k-2)p(1-p) \|\E[g_{i1} - g_{i0}]\|^2 + \E[9MR^2]
    \leq (k-2)p(1-p) \|\E[g_{i1} - g_{i0}]\|^2 + 36M~,
    $$
as claimed.
\end{proof}
\begin{lemma}
Under the same notation as the previous lemma we have
\begin{align*}
    \E\left[\left\|\frac{1}{k}\sum_{i=1}^k \tg_i\right\|^2\right]& \leq 
    \frac{1}{k}\left(3M + k p(1-p)\E[\|g(x,1) - g(x,0)\|]^2\right) \\
 &\qquad + \frac{(k-1)}{k}\left(36M^2 + (k-2)p(1-p)\|\E[g(x,1) - g(x,0)]\|^2\right)
\end{align*}
\end{lemma}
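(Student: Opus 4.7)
The plan is to expand the squared norm of the average and reduce to the two ingredients already in hand, namely Lemma~\ref{lemma:single_var} (bounding $\E[\|\tg_i\|^2]$) and Lemma~\ref{lemma:cross_prod} (bounding $\E[\langle \tg_i, \tg_j\rangle]$). Concretely, I would start from the algebraic identity
\[
\Bigl\|\tfrac{1}{k}\sum_{i=1}^k \tg_i\Bigr\|^2 = \tfrac{1}{k^2}\sum_{i=1}^k \|\tg_i\|^2 + \tfrac{1}{k^2}\sum_{i\neq j}\langle \tg_i,\tg_j\rangle,
\]
take expectations, and use the fact that all $\tg_i$ are identically distributed (and each pair $(\tg_i,\tg_j)$ with $i\neq j$ is identically distributed) to reduce the two sums to $k\cdot\E[\|\tg_1\|^2]$ and $k(k-1)\cdot\E[\langle \tg_1,\tg_2\rangle]$ respectively.

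Next I would substitute the two earlier lemmas directly. Lemma~\ref{lemma:single_var} bounds each diagonal term, contributing $\frac{1}{k}\bigl(9M^2 + (k-1)p(1-p)\E[\|g(x,1)-g(x,0)\|^2]\bigr)$, while Lemma~\ref{lemma:cross_prod} bounds each off-diagonal term, contributing $\frac{k-1}{k}\bigl(36M^2 + (k-2)p(1-p)\|\E[g(x,1)-g(x,0)]\|^2\bigr)$. Adding these two contributions gives exactly (up to the constants displayed in the statement) the claimed bound. No probabilistic argument beyond what is already in the two lemmas is needed at this stage.

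The only real subtlety is cosmetic bookkeeping: the bound as stated uses the constants $3M$ in the first bracket and $36M^2$ in the second, so I would simply take the larger of the constants produced by the two previous lemmas and absorb them into these (slightly loose) forms. No step here is technically difficult; the ``hard part'' is really just being careful about which pairs $(i,j)$ are distinct versus coincident in the double sum and making sure the factors of $\frac{1}{k^2}$, $k$, and $k(k-1)$ combine correctly into $\frac{1}{k}$ and $\frac{k-1}{k}$. Once this is written out, the result follows in one line from linearity of expectation and the two preceding lemmas.
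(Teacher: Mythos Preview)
Your proposal is correct and is essentially identical to the paper's proof: the paper also uses the decomposition (their equation \eqref{eq:norm_decomp}) to write $\E\bigl[\|\tfrac{1}{k}\sum_i \tg_i\|^2\bigr] = \tfrac{1}{k}\E[\|\tg_1\|^2] + \tfrac{k-1}{k}\E[\langle \tg_1,\tg_2\rangle]$ and then plugs in Lemmas~\ref{lemma:single_var} and~\ref{lemma:cross_prod}. Your remark about the cosmetic constant mismatch (e.g.\ $9M^2$ versus the stated $3M$, and $k-1$ versus $k$) is well taken; these are typos in the statement rather than gaps in the argument.
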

\begin{proof}
Using \eqref{eq:norm_decomp}
we can write
\begin{align*}
        \E\left[\left\|\frac{1}{k}\sum_{i=1}^k \tg_i\right\|^2\right] =
       \frac{1}{k} \E[||\tilde g_1||^2] + \frac{k-1}{k}\,\E[\langle\tilde g_1,\tilde g_2\rangle]~.
\end{align*}
By applying Lemma~\ref{lemma:single_var} and Lemma~\ref{lemma:cross_prod} we can bound the above expression as
\begin{align*}
 \E\left[\left\|\frac{1}{k}\sum_{i=1}^k \tg_i\right\|^2\right]
 &\leq \frac{1}{k}\left(3M + k p(1-p)\E[\|g(x,1) - g(x,0)\|]^2\right) \\
 &\qquad+ \frac{(k-1)}{k}\left(36M^2 + (k-2)p(1-p)\|\E[g(x,1) - g(x,0)]\|^2\right)~,
\end{align*}
as anticipated.
\end{proof}

\subsection{Proof of Theorem~\ref{thm:ucb}}
The proof of the theorem depends on the following proposition.
\begin{proposition}
\label{prop:truncated}
For any $\eta > 0$ define $c(k,\eta, p) = \sqrt{ \frac{\log(1/\eta)}{2k}}$ let
\begin{equation*}
    \tL_\eta(\hy, \alpha) = \ell(\hy, \alpha)\ind_{|\alpha - p| \leq c(k, \eta)}.
\end{equation*}
be a truncation of $\tL$. Let $(\bag, \alpha)$ be a labeled bag.
Let $x \in \bag$. With probability at least $1 - \eta$ over the choice of $(\bag, \alpha$), we have $|\tL_\eta(h(x), \alpha) = \tL(h(x), \alpha)|$ for all $h \in \hyps$.
\end{proposition}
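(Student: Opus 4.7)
The plan is very short because the statement is essentially a concentration result for the label proportion $\alpha$, combined with the observation that the truncation event does not depend on the hypothesis.

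First, I would write $\alpha = \frac{1}{k}\sum_{j=1}^k y_j$, where the $y_j$ are i.i.d.\ $\bernoulli(p)$ random variables with mean $p$, and then apply Hoeffding's inequality to the sum. This gives $\PP(|\alpha - p| > t) \le 2\exp(-2kt^2)$ for any $t > 0$. Setting the right-hand side equal to $\eta$ and solving for $t$ yields exactly the quantity $c(k,\eta)$ (up to the $\log 2$ factor absorbed into the constant; if one wants $\log(1/\eta)$ exactly, a one-sided Hoeffding applied to $\alpha-p$ and $p-\alpha$ together with a union bound also works).

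Second, I would define the event $E = \{|\alpha - p| \le c(k,\eta)\}$, which by the above has probability at least $1 - \eta$, and note that by construction of $\tL_\eta$ we have, on $E$,
\[
\tL_\eta(\hy, \alpha) \;=\; \tL(\hy, \alpha) \cdot \ind_{|\alpha - p| \le c(k,\eta)} \;=\; \tL(\hy, \alpha),
\]
for every prediction value $\hy$, and in particular for every $\hy = h(x)$ as $h$ ranges over $\hyps$.

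The key point, and the reason the proof is essentially immediate once concentration is invoked, is that the event $E$ is measurable with respect to $\alpha$ alone and has no dependence on $h$. So the uniform claim ``for all $h \in \hyps$'' comes for free, with no union bound, covering number, or Rademacher argument needed. The only genuinely care-worthy step is matching the constant in $c(k,\eta)$ to the form asserted in the statement, which is a minor accounting issue rather than a real obstacle.
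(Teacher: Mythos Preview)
Your proposal is correct and essentially identical to the paper's proof, which is a one-liner: the two losses agree unless $|\alpha - p| > c(k,\eta)$, and Hoeffding's inequality bounds the probability of that event by $\eta$. Your observation that the event $E$ depends only on $\alpha$ and not on $h$, so uniformity over $\hyps$ comes for free, makes explicit what the paper leaves implicit.
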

\begin{proof}
For any $h$, note that both losses agree unless $(\alpha - k) \geq c(k,\eta)$ but by Hoeffding's inequality this occurs with probability at most $\eta$.
\end{proof}

We now proceed to prove Theorem~\ref{thm:ucb}.
\thmUCB*
\begin{proof}
For $h \in \hyps$, bag $\bag$ and label proportion $\alpha$ define
\begin{equation*}
    \label{eq:bag_loss}
    L(h, \bag, \alpha) = \frac{1}{k}\sum_{x\in \bag} \tL(h(x), \alpha).
\end{equation*}

Let $\Phi(S) = \sup_{h \in \hyps} \E[\ell(h(x), y]) - \frac{1}{n}\sum_{i=1}^n L(h, \bag_i, \alpha_i)$~.

Let $(x_{ij}, y_{ij})_{i\in[n], j\in [k]}$ denote the \emph{instance} level sample. Let $\mathcal{S'} = (\bag_i', \alpha_i')_{i \in [k]}$ denote the sample obtained by switching a single sample $(x_{ij}, y_{ij})$ to $(x'_{ij}, y'_{ij})$. Without loss of generality assume we switch sample $(x_{11}, y_{11})$. Then by the subadditive property of the supremum and the fact that $(\bag_i, \alpha_i) = (\bag_i', \alpha_i')$ for $i \neq 1$ we have
\begin{align*}
    \Phi(\cS) - \Phi(\cS') \leq \frac{1}{n} \sup_{h \in \hyps} L(h,\bag_1, \alpha_1) - L(h, \bag_1', \alpha_1').
\end{align*}
If we expand the difference inside the supremum for a fixed $h$ we have:
\begin{align*}
    |L(h, \bag_1, \alpha_1) - L(h, \bag_1', \alpha_1')|
    \leq  \frac{1}{k}\sum_{j=1}^k |\tL(h(x_{1j}, \alpha_1) - \tL(h(x_{1j}'), \alpha'_1)|~.
\end{align*}

Now using the fact that $|\alpha_1- \alpha_1'|\leq \frac{1}{k}$ --- as only one label changed --- and $\tL$ is $2kB$-Lipchitz as a function of $\alpha$ we must have, for $j \neq 1$,
$$
|\tL(h(x_{1j}), \alpha_1) - \tL(h(x_{1j}'), \alpha'_1)|
= |\tL(h(x_{1j}), \alpha_1) - \tL(h(x_{1j}), \alpha'_1)|
\leq 2B~.
$$
On the other hand, using the fact that $k|\alpha - p| + \max(p, 1-p) \leq k+1$, and again the fact that $\tL$ is Lipchitz with respect to $\alpha$ we see that:

$$|\tL(h(x_{11}), \alpha_1) - \tL(h(x_{1j}'), \alpha'_1)| \leq (k+1)B + 2B = B(k+3).
$$
We thus have that 
$$
|L(h, B_1, \alpha_1) - L(h, B_1', \alpha_1')| \leq
\frac{1}{k}\left( B(k+3) + 2(k-1)B \right)
= \frac{(3 k + 1)}{k}B \leq 4B
$$
We therefore have 
$$\Phi(\cS) - \Phi(\cS') \leq \frac{4B}{n}$$.

By McDiarmid's inequality and the fact that we have $k n$ individual samples we thus have that with probability at least $1-\delta$: 
$$\Phi(S) \leq \E[\Phi(S)] + 4B\sqrt{\frac{k \log(1/\delta)}{2 n}}$$
We proceed to bound the expectation of $\Phi(S)$:
\begin{align*}
\E[\phi(\cS)] &= \E_{\cS}\left[\sup_{h\in \hyps} \E[\ell(h(x), y)] -\frac{1}{n}\sum_{i=1}^n L(h, \bag_i, \alpha_i)\right]     \\
&= \E_\cS\left[\sup_{h \in \hyps} \E_{\cS'}\left[\frac{1}{n} \sum_{i=1}^n L(h, \bag_i', \alpha_i')\right]- L(h, \bag_i,\alpha_i)\right] \\
&\leq \frac{1}{n}\E_{\cS, \cS'} \left[\sup_{h \in \hyps} 
\sum_{i=1}^n L(h, \bag_i', \alpha_i')- L(h, \bag_i,\alpha_i)\right]~,
\end{align*}
where $\cS'$ is another i.i.d. sample drawn from the same distribution as $\cS$. The second inequality follows from the fact that $\E[L(h, \bag, \alpha)] = \E[\ell(h(x), y)]$. Let $\sigma_{ij}$ be a random variable uniformly distributed in $\{-1,1\}$ and $\boldsymbol{\sigma} = (\sigma_{ij})_{i\in [n],j\in [k]}$. Then by a standard Rademacher complexity argument (e.g., \cite{mohri}), we have
\begin{align}
\label{eq:first_rad}
    \frac{1}{n}\E_{\cS, \cS'} \left[\sup_{h \in \hyps} 
\sum_{i=1}^n L(h, \bag_i' \alpha_i')- L(h, \bag_i,\alpha_i)\right]
&= \frac{1}{kn}\E_{S, S'}\left[ \sup_{h\in \hyps} \sum_{i,j} \tL(h(x'_{ij}, \alpha_i') - \tL(h(x_{ij}), \alpha_i) \right] \nonumber \\
&= \frac{2}{kn}\E_{S, \boldsymbol{\sigma}}\left[\sup_{h\in \hyps} \sum_{i,j} \sigma_{ij}\tL(h(x_{ij}), \alpha_i) \right]~.
\end{align}
Let $\beta > 0$ and $\eta = \frac{\beta}{kn}$. By Proposition~\ref{prop:truncated} we know that we can rewrite $\tL(h(x_{ij}), \alpha_i)$ as $\tL_\eta(h(x_{ij}, \alpha_i) + Z_{ijh}$. Moreover $|Z_{ijh}| \leq 2kB$ and by the union bound, with probability at least $1- \beta$, for all $i,j,h$ we have $Z_{ijh} = 0$. Using this fact we may bound  \eqref{eq:first_rad} as follows
\begin{align}
\frac{2}{kn}\E_{S, \boldsymbol{\sigma}}\left[\sup_{h\in \hyps} \sum_{i,j} \sigma_{ij}\tL(h(x_{ij}), \alpha_i) \right] &\leq 
\frac{2}{kn} \E_{\cS, \boldsymbol{\sigma}}\left[
\sum_{i,j} \sigma_{ij}\tL_\eta(h(x_{ij}), \alpha_i)\right] + 
\frac{2}{kn} \E_{\cS, \boldsymbol{\sigma}}\left[\sup_{h \in \hyps}
\sum_{i,j} \sigma_{ij} Z_{ijh} \right] \nonumber\\
&\leq \frac{2}{kn} \E_{\cS, \boldsymbol{\sigma}}\left[
\sum_{i,j} \sigma_{ij}\tL_\eta(h(x_{ij}), \alpha_i)\right] + 4 k B \beta~. \label{eq:rad2}
\end{align}
Finally, we use the definition of $\tL_\eta$ to bound the above expression:
\begin{align*}
   \tL_\eta(h(x_{ij}, \alpha_i) 
   &= \left(k(\alpha -p) + p \right)\ind_{|\alpha_i - p| < c(k, \eta)} \ell(h(x_{ij}), 1)
   + \left(k(p - \alpha_i) + (1 - p) \right)\ind_{|\alpha_i - p| < c(k, \eta)} \ell(h(x_{ij}), 0) \\
   &= \psi^{(1)}_{i} \big(\ell(h(x_{ij},1)\big) + 
   \psi^{(0)}_{i}\big(\ell(h(x_{ij}, 1)\big)
\end{align*}
where $\psi^{(1)}_{i}(z) = \left(k(\alpha - p) + p\right) \ind_{|\alpha_i - p| < c(k,\eta)}$ and $\psi^{(0)}_i$ is similarly defined. Notice that due to the indicator function we must have that $\psi_i^{(r)}$ is $(k c(k,\eta) + 1)$-Lipchitz. Define the set of functions $\hyps_\ell^{(1)} = \{x \to \ell(h(x), 1) \colon h \in  \hyps\}$ and $\hyps_\ell^{(0)} = \{x \to \ell(h(x), 0) \colon h \in \hyps\}$ we then have that 
\begin{flalign*}
&\frac{2}{kn} \E_{\cS, \boldsymbol{\sigma}}\left[
\sum_{i,j} \sigma_{ij}\tL_\eta(h(x_{ij}), \alpha_i)\right] & \\
&\leq \frac{2}{kn} \E_{\cS, \boldsymbol{\sigma}}\left[
 \sup_{h \in \hyps}\sum_{i,j} \sigma_{ij} \psi_i^{(1)}\big(\ell(h(x_{ij}, 1)\big)
 \right]
 + \frac{2}{kn} \E_{\cS, \boldsymbol{\sigma}}\left[
 \sup_{h \in \hyps}\sum_{i,j} \sigma_{ij} \psi_i^{(0)}\big(\ell(h(x_{ij}, 0)\big)
 \right] \\
 &= \frac{2}{kn} \E_{\cS, \boldsymbol{\sigma}}\left[
 \sup_{g \in \hyps_\ell^{(1)}}\sum_{i,j} \sigma_{ij} \psi_i^{(1)} \circ g(x_{ij})
 \right]
 +  \frac{2}{kn} \E_{\cS, \boldsymbol{\sigma}}\left[
 \sup_{g \in \hyps_\ell^{(0)}}\sum_{i,j} \sigma_{ij} \psi_i^{(0)} \circ g(x_{ij})
 \right]~.
\end{flalign*}
Finally, we use the fact that all functions $\psi_i^{(r)}$ are $(k c(k,\eta) + 1)$-Lipchitz together with Talagrand's contraction Lemma \cite{talagrand, mohri} to show that the above expression is bounded by
\begin{align*}
2\big(k c(k, \eta) + 1 \big)\left(\rad_{kn}(\hyps_\ell^{(1)}) +
 \rad_{kn}(\hyps_\ell^{(0)})\right)
&= 2\left(\sqrt{k \log \frac{1}{\eta}} + 1 \right)
\left(\rad_{kn}(\hyps_\ell^{(1)}) +
 \rad_{kn}(\hyps_\ell^{(0)})\right) \\
 &=2\left(\sqrt{k \log \frac{kn}{\beta}} + 1 \right)
\left(\rad_{kn}(\hyps_\ell^{(1)}) +
 \rad_{kn}(\hyps_\ell^{(0)})\right)~.
 \end{align*}
Replacing this expression in \eqref{eq:rad2} and setting $\beta = \frac{1}{kn}$ we see that 
$$
\E[\Phi(\cS)] \leq 2\left(\sqrt{2 k \log (kn)} + 1 \right)
\left(\rad_{kn}(\hyps_\ell^{(1)}) +
 \rad_{kn}(\hyps_\ell^{(0)})\right) + \frac{4B}{n}~.
$$
 
Putting it all together we see that with probability at least $1 - \delta$ over $\cS$ we have:
 \begin{equation*}
     \Phi(\cS) \leq 2\left(\sqrt{2 k \log (kn)} + 1 \right)
\left(\rad_{kn}(\hyps_\ell^{(1)}) +
 \rad_{kn}(\hyps_\ell^{(0)})\right) + \frac{4B}{n} + 4B \sqrt{\frac{k\log(1/\delta)}{2n}}. 
 \end{equation*}
The result follows from the definition of $\Phi$. 
\end{proof}

\subsection{Proof of Theorem \ref{thm:convex_pickone}}
\allowdisplaybreaks
\thmConvexPickone*
\begin{proof}
    We focus on the sequence of parameter vectors $\bw_0, \ldots, \bw_n$, which is a stochastic process the filtration of which is defined $$\cF_t = \{ (x_{11}, y_{11}, \tilde{y}_{11}), \ldots, (x_{tj}, y_{tj}, \tilde{y}_{tj}), J_1, \ldots, J_{t-1} \}$$ such that $J_i$ is a uniform random variable from $[k]$ for any $i$. 
We introduce some short-hand notations: the unbiased gradient based on surrogate labels is denoted by $\tg_{ti} = \tg ( x_{tj}, \ty_{tj})$ and $g_t = \tg_{tJ_t}$ is the gradient that is used by Algorithm \ref{alg:pickone}, i.e., the unbiased gradient compute based on a randomly selected instance from $\bag_t$ and the corresponding surrogate label. Furthermore we will denote the instance level gradient as $g_{tj} = g_{w_t} ( x_{tj}, y_{tj})$.
We can rewrite the L2 error as follows:
\begin{align}
    \E \Big[ \| \bw_{t+1} - \bw \|^2 \vert \cF_t \Big] \notag 
    & = 
    \E \left[ \left\| \Pi_{\mathcal{W}} \Big( \bw_{t} - \eta_t \tg_{t} - \bw \Big) \right\|^2 \vert \cF_t \right] \notag \\
    & \le 
    \E \left[\left\| \bw_{t} -\eta_t \tg_t - \bw \right\|^2 \vert \cF_t \right] \notag \\
    & \le 
    \E \left[ \left\| \bw_{t} - \bw \right\|^2 \vert \cF_t  \right] - 2 \eta_t \E \left[ \langle \tg_{t}, \bw_t - \bw \rangle \vert \cF_t \right] + \eta_t^2 \E \left[  \left\| \tg_{t} \right\|^2\vert \cF_t \right] \notag \\
    & = 
    \E \left[ \left\| \bw_{t} - \bw \right\|^2 \vert \cF_t  \right] - 2 \eta_t \frac{1}{k} \sum_{j=1}^k \E \left[ \langle g_{tj}, \bw_t - \bw \rangle \vert \cF_t \right] + \eta_t^2 \E \left[  \left\| \tg_{t} \right\|^2\vert \cF_t \right] \label{eq:ogd_decomp}
\end{align}
where in the last step follows from Corollary \ref{cor:debiasedLoss} applied to each term of the summation by noting that $\bw_t -\bw $ is not a random quantity with respect to filtration $\cF_t$. As a next step, we will upper bound using the assumption that $\E[\| g_{\bw_t} ( x, y) \|^2 ] \le G^2$ for any $t$ as
\[
\E \left[  \left\| \tg_{t} \right\|^2\vert \cF_t \right]  = \E \left[  \left\| \tg_{tJ_t}  \right\|^2\vert \cF_t \right] = \frac{1}{k} \sum_{j=1}^k  \E \left[ \left\| \tg_{i,j} \right\|^2\vert \cF_t, J_t = j \right]
\]
so we shall focus on $\E \Big[ \big\| \tg_{\bw}  ( x, \ty) \big\|^2 \Big]$ assuming that $\bw$ is fixed, and $\tY \sim \text{Bernoulli} ( \alpha_t )$. We can compute an upper bound as

\begin{align*}
\E \Big[ \big\| \tg_{\bw}  ( x, \ty) \big\|^2 \Big]
& = 
k^2 \E \left[\left\| g_{\bw}  ( x, \ty) \right\|^2 \right] \\ 
&\qquad -
2 k(k-1) \E \left[ \langle g_{\bw}  ( x, \ty), p \nabla g_{\bw}  ( x, 1) + (1 - p)\nabla g_{\bw}  ( x, 0) \rangle \right] \\
&\qquad + 
(k-1)^2 \E \left[ \left\| p g_{\bw}  ( x, 1) + (1 - p)\nabla g_{\bw}  ( x, 0) \right\|^2 \right] \\
& \le 
k^2 G^2 - 2 k(k-1)p \E \left[ \langle g_{\bw}  ( x, \ty), g_{\bw}  ( x, 1)\rangle \right]  \\
&\qquad - 2 k(k-1)(1-p) \E \left[ \langle g_{\bw}  ( x, \ty), \nabla g_{\bw}  ( x, 1)\rangle \right] \\
&\qquad + 
\underbrace{(k-1)^2 p^2 \E \left[\left\| g_{\bw}  ( x, 1) \right\|^2 \right] + (k-1)^2 (1-p)^2 \E \left[\left\| g_{\bw}  ( x, 0) \right\|^2 \right]}_{\le 2k^2G^2} \\
&\qquad + 
(k-1)^2 p ( 1-p) \E \left[ \langle g_{\bw}  ( x, 1), g_{\bw}  ( x, 0) \rangle\right] \\
& \le     
5k^2 G^2 
\end{align*}
where the last inequality follows from the Cauchy-Schwartz inequality applied to 
\[ \vert\langle g_{\bw}  ( x, 1), g_{\bw}  ( x, 0)\rangle \vert \le \| g_{\bw}  ( x, 1)\| \cdot \|g_{\bw}  ( x, 0)\| \le G^2
\]
and the fact that $p (1-p) \le 1/4$.
The convexity of the loss and (\ref{eq:ogd_decomp}) yield that
\begin{align*}
    \E \left[ F( \bw_t) - F( \bw ) \vert \cF_t \right] & \le 
    \frac{1}{k} \sum_{j=1}^k \E \Big[ \langle g_{tj}, \bw_t - \bw \rangle \vert \cF_t \Big] \\
    & \le 
    \frac{ \E \left[ \left\| \bw_{t} - \bw \right\|^2 \vert \cF_t  \right] - \E \Big[ \| \bw_{t+1}- \bw \|^2 \vert \cF_t \Big]}{2 \eta_t} + \frac{\eta_t}{2} \E \left[  \left\| \tg_{t} \right\|^2\vert \cF_t \right]  \\
    & \le 
    \frac{ \E \left[ \left\| \bw_{t} - \bw \right\|^2 \vert \cF_t  \right] - \E \Big[ \| \bw_{t+1}- \bw \|^2 \vert \cF_t \Big]}{2 \eta_t} + \frac{5 \eta_t k^2 G^2}{2} 
\end{align*}
which can be applied recursively as
\begin{align*}
    \E \left[\sum_{t=n-s}^n (F( \bw_t) - F( \bw )) \Big\vert \cF_t \right] & \le \frac{1}{2\eta_{n-s}} \E \left[ \left\| \bw_{n-s} - \bw \right\|^2 \vert \cF_{n-s}  \right] \\& \quad + \sum_{t=n-s+1}^n \frac{\E \left[ \left\| \bw_{t} - \bw \right\|^2 \vert \cF_{t}  \right]}{2}\left(\frac{1}{\eta_t} - \frac{1}{\eta_{t-1}}\right) + \frac{5G^2k^2}{2} \sum_{t=n-s}^n \eta_t~.
\end{align*}
Now we can follow Theorem 2 of~\cite{Shamir013} with $\eta = c/(k\cdot \sqrt{t})$ where the index $t$ is meant over bags. Let us upper bound $\E [ \| \bw_t - \bw \|^2 ]$ by $D^2$ and pick $\bw = \bw_{n-s}$ which yields
\begin{align*}
    \E \left[\sum_{t=n-s}^n (F( \bw_t) - F( \bw_{n-s} )) \Big\vert \cF_t \right] & \le 
    \frac{kD^2}{2c}\Big(\sqrt{n} - \sqrt{n-s-1} \Big) + \frac{5 c k G^2}{2} \sum_{t=n-s}^n \frac{1}{\sqrt{t}} \\
    & \le \left(\frac{kD^2}{2c} + 5 c k G^2 \right) \Big(\sqrt{n} - \sqrt{n-s-1} \Big) \\
    & \le \left(\frac{kD^2}{2c} + 5 c k G^2 \right) \frac{s+1}{\Big(\sqrt{n} + \sqrt{n-s-1} \Big)} \\
    &\le \left(\frac{kD^2}{2c} + 5 c k G^2 \right) \frac{s+1}{\sqrt{T}}
\end{align*}
where we used the fact that $\sum_{t=n-s}^n 1/\sqrt{t} \le 2( \sqrt{n} - \sqrt{n-s-1} )$. The rest of the proof is analogous to the proof of Theorem 2 of~\cite{Shamir013}.

The second part of Theorem follows a very similar pattern, where we apply Theorem \ref{t:gradient_squared} to upper bound $\E \left[ \| \tg( x_i, \alpha_t) \|^2 \right]$.
\end{proof}

\subsection{Proof of Theorem~\ref{thm:multiclass}}
\thmMulticlass*
\begin{proof}
Fix $i \in [k]$ and define $\alpha_c^{(i)} = \alpha_c - \frac{1}{k} \ind_{y_i = c}$
so that $\ind_{y_i = c} = k(\alpha_c - \alpha_c^{(i)})$. 
Moreover, since $\alpha_c^{(i)} = \frac{1}{k} \sum_{j \neq i} \ind_{y_j = c}$ and the samples $(x_1, y_1), \dots, (x_k, y_k)$ are independent, we are guaranteed that $\alpha^{(i)}_c$ is independent of $x_i$.
Let $x_i \in \bag$ and let $y_i$ be its corresponding label we then have
\begin{align*}
    \E[g(x_i, y_i)] &= \sum_{c=1}^C \E[\ind_{y_i = c}\,g(x_i, c)] 
    = \sum_{c=1}^C \E[k(\alpha_c - \alpha_c^{(i)} ) g(x_i, c)] \\
    &= \sum_{c=1}^C\E[k\alpha_c\, g(x_i, c)]  - \sum_{c=1}^C \E[k\alpha_c^{(i)}\,g(x_i, c)]~. 
\end{align*}
Now, since $\alpha_c^{(i)}$ is independent of $g(x_i,c)$ and $\E[\alpha_c^{(i)}] = \frac{k-1}{k}p_c$, we have that $\E[k\alpha_c^{(i)}g(x_i, c)] = (k-1)p_c \E[g(x_i, c)]$. Therefore the above expression can be simplified as
\begin{align*}
    \E_{x, y \sim \cD}[g(x, y)] = \E[g(x_i, y_i)] &=\sum_{c=1}^C\E[k \alpha_c g(x_i, c)]  - \sum_{c=1}^C (k-1)p_c\E[g(x_i, c)]\\
    & = \E\left[\sum_{c=1}^C \big(k \alpha_c - (k-1)p_c\big)g(x_i, c)\right] = \E[\tg(x_i, \boldsymbol{\alpha})]~,
\end{align*}
as claimed.
\end{proof}

\ \\ 
\subsection{Further plots}
\label{app:cifar10}
\begin{figure}[ht!]
    \centering
    \includegraphics[width=0.29\textwidth]{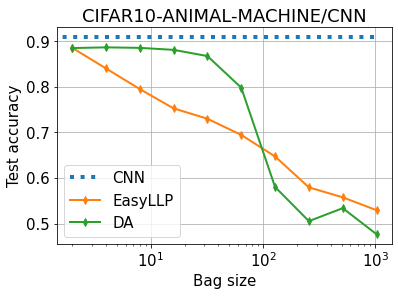}\quad
    \includegraphics[width=0.29\textwidth]{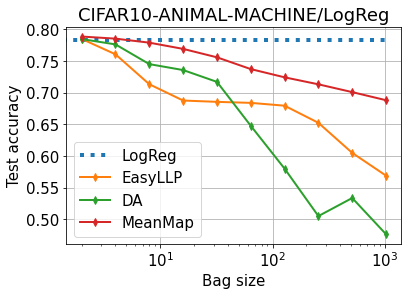}
    \caption{Performance of various methods trained with label proportion using CIFAR-10 on specific classes. Here, the event level performance is given by either a logistic regression algorithm (``LogReg") or a CNN. ``DA" denotes the method from \cite{dulac2019deep}. }
\end{figure}

\end{document}